\newcommand{\inner}[1]{\left\langle#1\right\rangle}
\def\A{\mathcal{A}}
\def\R{\mathbb{R}}
\def\D{\mathcal{D}}
\def\P{\mathcal{P}}
\newcommand{\norm}[1]{\left\|#1\right\|}
\def\grad{\mathop{\rm grad}\nolimits}
\def\bzero{{\mathbf 0}}
\def\bu{\mathbf u}
\def\bA{\mathbf A}
\def\bB{\mathbf B}
\def\bI{{\mathbf I}}
\def\bQ{\mathbf Q}
\def\bU{{\mathbf U}}
\def\bV{{\mathbf V}}
\def\bW{{\mathbf W}}
\def\bX{{\mathbf X}}
\def\bY{{\mathbf Y}}
\def\bZ{{\mathbf Z}}
\def\maxop{\mathop{\rm max}\limits} 
\def\minop{\mathop{\rm min}\limits}
\newcommand{\range}{{\rm range}}
\def\min{\mathop{\rm min}\nolimits}
\newcommand{\trace}{{\rm trace}}
\newcommand{\OG}[1]{{\mathcal{O}({#1})}}
\newcommand{\hess}{\mathrm{Hess}}
\newcommand{\LabelFunc}[2]{\parbox[t]{#2\textwidth}%
	{\hspace*{\fill}{\vspace*{-0.3cm}#1}\hspace*{\fill}}}
\newcommand{\TwoLabels}[2]{\LabelFunc{#1}{0.48}\hfill\LabelFunc{#2}{0.48}}
\newcommand{\ThreeLabels}[3]{\LabelFunc{#1}{0.31}\hfill
                             \LabelFunc{#2}{0.31}\hfill\LabelFunc{#3}{0.31}}
\newcommand{\FourLabels}[4]{\LabelFunc{#1}{0.23}\hfill\LabelFunc{#2}{0.23}\hfill
			      \LabelFunc{#3}{0.23}\hfill\LabelFunc{#4}{0.23}}
\newcommand{\FiveLabels}[5]{\LabelFunc{#1}{0.17}\hfill\LabelFunc{#2}{0.17}\hfill\LabelFunc{#3}{0.17}\hfill\LabelFunc{#4}{0.17}\hfill\LabelFunc{#5}{0.17}}
\newif\iflongversion
\begin{document} 
\title{Structured low-rank matrix learning:\\algorithms and applications}
\author{\name Pratik Jawanpuria \email pratik.jawanpuria@microsoft.com\\
\name Bamdev Mishra \email bamdevm@microsoft.com}
\editor{\ }
\maketitle

\begin{abstract} 

We consider the problem of learning a low-rank matrix, constrained to lie in a linear subspace, and introduce a novel factorization for modeling such matrices. A salient feature of the proposed factorization scheme is it decouples the low-rank and the structural constraints onto separate factors. We formulate the optimization problem on the  Riemannian spectrahedron manifold, where the Riemannian framework allows to develop computationally efficient conjugate gradient and trust-region algorithms. Experiments on problems such as standard/robust/non-negative matrix completion, Hankel matrix learning and multi-task learning demonstrate the efficacy of our approach. A shorter version of this work  has been published in ICML'18 \citep{mjaw18a}. 



\end{abstract} 

\section{Introduction}
Our focus in this paper is on learning structured low-rank matrices and we consider the following problem: 
\begin{equation} \label{eqn:genericPrimal0}
\begin{array}{ll}
	\minop_{\bW\in\R^{d\times T}} &  \displaystyle CL(\bY,\bW) + \norm{\bW}_{*}^2, \\
	 {\rm subject\ to}& \bW \in \D,
\end{array}
\end{equation}
where $\bY \in \R^{d\times T}$ is a given matrix, $L: \R^{d\times T}\times\R^{d\times T}  \rightarrow \R$ is a convex loss function, $\|\cdot\|_{*}$ denotes the nuclear norm regularizer, $C>0 $ is the {cost parameter}, and $\D$ is the {\it linear} subspace corresponding to structural constraints. It is well known the nuclear norm regularization promotes low rank solutions since $\norm{\bW}_{*}$ is equal to the $\ell_1$-norm on the singular values of $\bW$ \citep{Fazel01}. The linear subspace $\D$ in problem (\ref{eqn:genericPrimal0}) is represented as $\D \coloneqq\{\bW:\A(\bW)\diamondsuit\bzero\}$, where $\A:\R^{d\times T}\rightarrow \R^n$ is a linear map and $\diamondsuit$ represents equality ($=$) or greater than equal to ($\geq$) constraint.  

Low-rank matrices are commonly learned in several machine learning applications such as matrix completion \citep{abernethy09a,boumal11a}, multi-task learning \citep{Argyriou08,Zhang10,mjaw12a}, multivariate regression \citep{yuan07a, journee10a}, to name a few. In addition to the low-rank constraint, other structural constraints may exist, e.g., entry-wise {\it non-negative}/{\it bounded} constraints~\citep{kannan12a,marecek17a,fang17a}. Several linear dynamical system models require learning a low-rank {\it Hankel} matrix \citep{Fazel13, markovsky13a}. A Hankel matrix has the structural constraint that all its anti-diagonal entries are the same. In robust matrix completion and robust PCA problems \citep{Wright09}, the matrix is learned as a superimposition of a low-rank matrix and a sparse matrix. This sparse structure is modeled effectively by choosing the loss function as the $\ell_1$-loss~\citep{Cambier16}. 
Similarly, low-rank 1-bit matrix completion solvers employ the {logistic} loss function~\citep{Davenport14,Bhaskar15} to complete a $0/1$ matrix.

We propose a {generic} framework to the structured low-rank matrix learning problem (\ref{eqn:genericPrimal0}) that is well suited for handling a variety of loss functions $L$ (including non-smooth ones such as the $\ell_1$-loss), structural constraints $\bW\in\D$, and is scalable for large-scale problem instances. Using the duality theory, we introduce a novel modeling of structured low-rank matrix $\bW$ of rank $r$ as $\bW=\bU\bU^\top (\bZ+\bA)$, where $\bU\in\R^{d\times r}$ and $\bZ,\bA\in\R^{d\times T}$. Our factorization naturally decouples the low-rank and structural constraints on $\bW$. The low-rank of $\bW$ is enforced with $\bU$, the structural constraint is modeled by $\bA$, and the loss function specific structure is modeled by $\bZ$. The separation of low-rank and structural constraints onto separate factors makes the optimization conceptually simpler. 
To the best of our knowledge, such a decoupling of constraints has not been studied in the existing structured low-rank matrix learning literature~\citep{Fazel13,markovsky13a,yu14b,Cambier16,fang17a}. 

Our approach leads to an optimization problem on the {\it Riemannian spectrahedron} manifold. We exploit the Riemannian framework to develop computationally efficient conjugate gradient (first-order) and trust-region (second-order) algorithms. 
The proposed algorithms outperform state-of-the-art in robust and non-negative matrix completion problems as well as low-rank Hankel matrix learning application. Our algorithms readily scale to the Netflix data set, even with the non-smooth $\ell_1$-loss and $\epsilon$-SVR ($\epsilon$-insensitive support vector regression) loss functions.  

The main contributions of the paper are: 
\begin{itemize}
\item[--] we propose a novel factorization $\bW=\bU\bU^\top (\bZ+\bA)$ for modeling structured low-rank matrices.
\item[--] we present a unified framework to learn structured low-rank matrix for several applications with different constraints and loss functions. 
\item[--] we develop Riemannian conjugate gradient and trust-region algorithms for our framework. Our algorithms obtain state-of-the-art generalization performance across applications.
\end{itemize}


The outline of the paper is as follows. We introduce our structured low-rank matrix learning framework in Section~\ref{sec:formulation}. 
Section~\ref{sec:optimizationAlgorithm} presents our optimization approach. Section~\ref{sec:specialized_formulations} discusses specialized formulations (for various applications) within from our framework. The empirical results are presented in Section~\ref{sec:empiricalResults}. A shorter version of this work has been published in ICML'18 \citep{mjaw18a}. Our codes are available at \url{https://pratikjawanpuria.com/} and \url{https://bamdevmishra.com/}. 
We begin by discussing the related works in the next section.

\section{Related work}\label{sec:related}
\textbf{Matrix completion:} 
Existing low-rank matrix learning literature has been primarily focused on problem  (\ref{eqn:genericPrimal0}) with the square loss and in the absence of the structural constraint $\bW \in \D$. 
Singular value thresholding~\citep{cai10a}, proximal gradient descent~\citep{toh10a}, active subspace selection~\citep{Hsieh14} are some of the algorithms proposed to solve (\ref{eqn:genericPrimal0}) without the structural constraint $\bW \in \D$. 
Alternatively, several works \citep{wen12a,mishra14c,boumal15a} propose to learn a low-rank matrix by fixing the rank explicitly, i.e. $\bW=\bU\bV^\top$, where $\bU\in\R^{d\times r}$, $\bV\in\R^{T\times r}$ and the rank $r$ is fixed {\it a priori}. 

\textbf{Robust matrix completion:} A matrix completion problem where few of the observed entries are perturbed/outliers \citep{Cambier16}. 
\citet{candes11a} model the robust matrix completion problem as a convex program with separate low-rank and sparse constraints. \citet{he12a} propose to use $\ell_1$-loss as the loss function and impose only the low-rank constraint. They propose an online algorithm that learns a low-dimensional subspace. \citet{Cambier16} build on this and employ the pseudo-Huber loss as a proxy for the non-smooth $\ell_1$-loss. They solve the following optimization problem:
{
\begin{equation}\label{eqn:rmcOrg}
\small\minop_{\substack{\bW\in\R^{d\times T},\\{\rm {rank}}(\bW)=r}}  \sum\limits_{(i,j)\in\Omega} C\sqrt{\delta^2 + (\bY_{ij}-\bW_{ij})^2} + \sum\limits_{(i,j)\in{\Omega^c}} \bW_{ij}^2,
\end{equation}
}where $\Omega$ is the set of observed entries,  $\Omega^c$ is the complement of set $\Omega$, $r$ is the given rank, and $\delta>0$ is a given parameter.  
For  small value of $\delta$, the pseudo-Huber loss is close to the $\ell_1$-loss. They develop a large-scale Riemannian conjugate gradient algorithm in the fixed-rank setting.

\textbf{Non-negative matrix completion:} Certain recommender system and image completion based applications desire matrix completion with non-negative entries \citep{kannan12a,sun13a,tsagkatakis16a,fang17a}, i.e $\bW\geq\bzero$. \citet{kannan12a} present a block coordinate descent algorithm that learns $\bW$ as $\bW=\bU\bV^\top$ for a given rank $r$.  Recently, \citet{fang17a} propose a large-scale alternating direction method of multipliers (ADMM) algorithm for (\ref{eqn:genericPrimal0}) with the square loss in this setting. In each iteration $i$, for a known matrix $\bB_i$, they need to solve the following key sub-problem
$$\minop_{\bW\in\R^{d\times T}} C\norm{\bW-\bB_i}^2_{F} + \frac{1}{2}\norm{\bW}_{*}^2.$$
They approximately solve it in closed form for a given rank $r$ via soft-thresholding of the singular values of $\bB_i$. 

\textbf{Hankel matrix learning:} The Hankel constraint is a linear equality constraint (denoted by $\mathcal{H}(\bW)=\bzero$). \citet{Fazel13} propose the ADMM approaches to solve  (\ref{eqn:genericPrimal0}) with the above constraint. On the other hand, \citet{yu14b} learn a low-rank Hankel matrix by relaxing the structural constraint with a penalty term in the objective function. They  solve the following optimization problem: 
$$\minop_{\bW\in\R^{d\times T}} C\norm{\bW-\bY}^2_{F} + \lambda\norm{\mathcal{H}(\bW)}_{F}^2 +  \frac{1}{2}\norm{\bW}_{*}.$$ 
Hence, they approximately enforce the structural constraint. They discuss a generalized gradient algorithm to solve the above problem.  
\citet{markovsky13a} model the low-rank Hankel matrix learning problem as a non-linear least square problem in the fixed rank setting and propose a second-order algorithm. 

\textbf{Multi-task feature learning:} The goal in multi-task feature learning \citep{Argyriou08} is to jointly learn a low-dimensional latent feature representation common across several classification/regression problems (tasks). The optimization approaches explored include alternate minimization \citep{Argyriou08}, accelerated gradient descent \citep{ji09} and mirror descent  \citep{mjaw11a}. 
Problems such as multi-class classification \citep{Amit07}, multi-variate regression \citep{yuan07a}, matrix completion with side information \citep{xu13a}, among others, may be viewed as special cases of multi-task feature learning.

In the following section, we present a unified framework for solving the above problems. 

\section{Structured low-rank matrix learning}\label{sec:formulation}


For notational simplicity, we consider only equality structural constraint $\A(\bW)=\bzero$ to present the main results.  However, our framework also admits inequality constraints $\A(\bW)\ge0$. 
We use the notation $\P^d$ to denote the set of $d\times d$ positive semi-definite matrices with \emph{unit} trace. 

Problem (\ref{eqn:genericPrimal0}) is a convex problem with linear constraint. In addition to the trace-norm regularizer, the loss function $L$ may also be non-smooth (as in the case of $\ell_1$-loss or $\epsilon$-SVR loss). Dealing with (\ref{eqn:genericPrimal0}) directly or characterizing the nature of its optimal solution in the general setting is non trivial. To this end, we propose an equivalent partial dual of (\ref{eqn:genericPrimal0}) in the following section. The use of dual framework often leads to a better understanding of the primal problem \citep{mjaw15a}. In our case, the duality theory helps in discovering a novel factorization of its optimal solution, which is not evident directly from the primal problem (\ref{eqn:genericPrimal0}). This subsequently helps in the development of computationally efficient algorithms. 



\subsection{Decoupling of constraints and duality}\label{sec:minimaxformulation}
The following theorem presents a dual problem equivalent to the primal problem (\ref{eqn:genericPrimal0}). 

\begin{theorem}\label{thm:dual1}
Let $L^{*}$ be the Fenchel conjugate function of the loss: $L:\R^{d\times T}\rightarrow\R$, $v\mapsto L(\bY,v)$. An equivalent partial dual problem of (\ref{eqn:genericPrimal0}) with $\A(\bW)=\bzero$ constraint is
\begin{equation}\label{eqn:dual11}
\minop_{\Theta\in \P^d}\ \ \maxop_{\bZ\in\R^{d\times T},\,s\in\R^n}\ \  f(\Theta,\bZ,s), 
\end{equation}
where the function $f$ is defined as
\begin{equation}\label{eqn:dual12}
f(\Theta,\bZ,s) \coloneqq  - CL^{*}(\frac{-\bZ}{C}) - \frac{1}{2}\inner{\Theta(\bZ+\A^{*}(s)),\bZ+\A^{*}(s)},
\end{equation}
and $\A^{*}:\R^n\rightarrow\R^{d\times T}$ is the adjoint of $\A$. 
\end{theorem}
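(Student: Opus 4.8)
The plan is to reach (\ref{eqn:dual11}) in three moves: replace $\norm{\bW}_{*}^{2}$ by its semidefinite variational form, swap the two resulting minimizations, and then, for a fixed $\Theta$, dualize the affine constraint and the loss and eliminate $\bW$ by completing the square. I would start from the identity $\norm{\bW}_{*}^{2}=\minop_{\Theta\in\P^d}\inner{\Theta^{\dagger}\bW,\bW}$, where $\Theta^{\dagger}$ is the Moore--Penrose pseudo-inverse and the right-hand side is read as $+\infty$ when $\range(\bW)\not\subseteq\range(\Theta)$; the minimum is attained at $\Theta=(\bW\bW^{\top})^{1/2}/\norm{\bW}_{*}$, which lies in $\P^d$ since its trace is $\norm{\bW}_{*}/\norm{\bW}_{*}=1$. (This follows by diagonalizing with the SVD of $\bW$ and applying Cauchy--Schwarz, $\sum_i\sigma_i^{2}/\theta_i\ge(\sum_i\sigma_i)^{2}$ for $\theta_i\ge0$ with $\sum_i\theta_i=1$.) Substituting this into (\ref{eqn:genericPrimal0}) with $\A(\bW)=\bzero$ and exchanging the two infima --- a mere reordering of an iterated infimum --- rewrites the primal optimal value as
\[
\minop_{\Theta\in\P^d}\ g(\Theta),\qquad g(\Theta)\coloneqq\minop_{\bW:\ \A(\bW)=\bzero}\ CL(\bY,\bW)+\inner{\Theta^{\dagger}\bW,\bW}.
\]
It then suffices to show $g(\Theta)=\maxop_{\bZ,\,s}f(\Theta,\bZ,s)$ for each fixed $\Theta\in\P^d$.

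Fix $\Theta$. The problem defining $g(\Theta)$ is convex in $\bW$ with an affine constraint that is feasible ($\bW=\bzero$ works), so Lagrangian strong duality applies: with multiplier $s\in\R^{n}$ and $\inner{s,\A(\bW)}=\inner{\A^{*}(s),\bW}$ one gets $g(\Theta)=\maxop_{s}\minop_{\bW}\bigl(CL(\bY,\bW)+\inner{\Theta^{\dagger}\bW,\bW}+\inner{\A^{*}(s),\bW}\bigr)$. Next write the loss in conjugate form, $CL(\bY,\bW)=\maxop_{\bZ}\bigl(-\inner{\bZ,\bW}-CL^{*}(-\bZ/C)\bigr)$ (valid because $v\mapsto L(\bY,v)$ is closed proper convex with full effective domain), and interchange this maximization with the minimization over $\bW$; this is legitimate by Fenchel--Rockafellar duality, the qualification holding since the remaining $\bW$-dependent part --- a convex quadratic plus a linear term --- is finite everywhere. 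This leaves $g(\Theta)=\maxop_{\bZ,\,s}\bigl(-CL^{*}(-\bZ/C)+\minop_{\bW}(\inner{\Theta^{\dagger}\bW,\bW}+\inner{\A^{*}(s)-\bZ,\bW})\bigr)$.

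The inner problem is now a classical quadratic program: completing the square --- using that $\Theta^{\dagger}\Theta$ is the orthogonal projection onto $\range(\Theta)$, on which $\Theta$ inverts $\Theta^{\dagger}$ --- gives an optimizer proportional to $\Theta(\bZ-\A^{*}(s))$ and an optimal value equal to the quadratic term $-\tfrac12\inner{\Theta(\bZ+\A^{*}(s)),\bZ+\A^{*}(s)}$ of (\ref{eqn:dual12}), after the harmless sign change $s\mapsto-s$ (permissible since $s$ is unconstrained). Collecting the three moves gives exactly $\minop_{\Theta\in\P^d}\maxop_{\bZ,s}f(\Theta,\bZ,s)$, proving (\ref{eqn:dual11})--(\ref{eqn:dual12}); moreover the optimal $\bW$ read off from the square-completion is of the form $\Theta(\bZ+\A^{*}(s))$, which upon writing $\Theta=\bU\bU^{\top}$ is the factorization $\bW=\bU\bU^{\top}(\bZ+\bA)$ announced in the introduction.

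The delicate part is the two minimax interchanges: since neither $\bW$ nor $\bZ$ ranges over a compact set, Sion's theorem is unavailable and one must instead verify the convex-duality constraint qualifications (affine equality constraint; closed, full-domain convex loss). A secondary technical point is the quadratic conjugate when $\Theta$ is singular --- the clean closed form tacitly takes $\bZ+\A^{*}(s)$ in $\range(\Theta)$, which is automatic for $\Theta\succ0$ and is otherwise handled either by a limiting argument on the interior of $\P^d$ or by checking that the range condition is inactive at the optimal $\Theta$.
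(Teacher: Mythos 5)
Your proposal is correct and follows essentially the same route as the paper: the variational form $\|\bW\|_{*}^2=\min_{\Theta\in\P^d}\inner{\Theta^{\dagger}\bW,\bW}$, reordering of the two minimizations, and partial Lagrangian/Fenchel dualization in $\bW$ with multipliers $\bZ$ and $s$ (your direct biconjugate representation of the loss is equivalent to the paper's splitting $\bU=\bW$ with multiplier $\bZ$), recovering $\bW$ proportional to $\Theta(\bZ+\A^{*}(s))$ from the inner quadratic. The only discrepancy is factor-of-two bookkeeping: with the quadratic written as $\inner{\Theta^{\dagger}\bW,\bW}$ (no $\tfrac{1}{2}$), completing the square gives $-\tfrac{1}{4}\inner{\Theta(\bZ+\A^{*}(s)),\bZ+\A^{*}(s)}$ rather than the claimed $-\tfrac{1}{2}$ term of (\ref{eqn:dual12}) — a slip that mirrors the paper's own unexplained insertion of $\tfrac{1}{2}$ into its sub-problem even though (\ref{eqn:genericPrimal0}) carries $\|\bW\|_{*}^2$.
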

\begin{proof}
We first note the the following variational characterization of the squared trace norm regularizer from Theorem 4.1 in~\citep{Argyriou06}: 
\begin{align}
\|\bW\|_{*}^2 = \minop_{\Theta\in \P^d, \range(\bW)\subseteq \range(\Theta)}\ \inner{\Theta^{\dagger}\bW,\bW},\label{eqn:variationalThm}
\end{align}
where $\range(\Theta)=\{\Theta z : z\in\R^d\}$. For a given $\bW$ matrix, the optimal $\bar{\Theta}=\nicefrac{\sqrt{\bW\bW^\top}}{\trace({\sqrt{\bW\bW^\top}})}$.

By employing the result in (\ref{eqn:variationalThm}), problem (\ref{eqn:genericPrimal0}) can be shown to be equivalent to the following problem: 
\begin{align}
&\minop_{\Theta\in \P^d}\ \minop_{\bW\in\R^{d\times T}}\ CL(\bY,\bW)+\frac{1}{2}\inner{\Theta^{\dagger}\bW,\bW}+  i_{\range(\Theta)}(\bW)\label{eqn:squaredTraceNorm}\\ 
&{\rm \ subject\ to:\ }\A(\bW)=\bzero,\nonumber
\end{align}
where $i_H$ is the indicator function for set $H$. 

In the following, we derive the dual problem of the following sub-problem of (\ref{eqn:squaredTraceNorm}): 
\begin{align}
&\minop_{\bW\in\R^{d\times T}}\ CL(\bY,\bW)+\frac{1}{2}\inner{\Theta^{\dagger}\bW,\bW}+  i_{\range(\Theta)}(\bW)\label{eqn:subproblem}\\
& {\rm \ subject\ to:\ }\A(\bW)=\bzero,\nonumber. 
\end{align}
We now introduce auxiliary variable $\bU$ such that $\bU=\bW$. The dual variables with respect to the constraints $\bU=\bW$ and $\A(\bW)=\bzero$ are $\bZ\in\R^{d\times T}$ and $s\in\R^n$ , respectively. Hence, the Lagrangian ($Q$) is as follows: 
\begin{align}
Q(\bW,\bU,\bZ,s) =  CL(\bY,\bU)+\frac{1}{2}\inner{\Theta^{\dagger}\bW,\bW}+  i_{\range(\Theta)}(\bW) + \inner{\bZ,\bU-\bW}  -\inner{s,\A(\bW)}. 
\end{align}
The dual function $q(\bZ,s;\Theta)$ of (\ref{eqn:subproblem}) is defined as 
\begin{align}\label{appendix:eqn:dualFunction}
q(\bZ,s;\Theta)\coloneqq \minop_{\bW\in\R^{d\times T},\bU\in\R^{d\times T}} Q(\bW,\bU,\bZ,s)
\end{align}
Using the definition of the conjugate function~\citep{Boyd04}, we get 
\begin{align}\label{appendix:eqn:minAuxillary}
\minop_{\bU\in\R^{d\times T}}  CL(\bY,\bU) + \inner{\bZ,\bU} = -C L^{*}(\frac{-\bZ}{C}),
\end{align}
where $L^{*}$ be the Fenchel conjugate function of the loss: $L:\R^{d\times T}\rightarrow\R^{d\times T}$, $v\mapsto L(\bY,v)$. 

We next compute the minimizer of $Q$ with respect to $\bW$. From the definition of the adjoint operator, it follows that 
\begin{align*}
\inner{s,\A(\bW)} = \inner{\A^*(s),\bW}
\end{align*}
The minimizer of $Q$ with respect to $\bW$ satisfy the following conditions 
\begin{align}
& \frac{\partial}{\partial \bW}\Big(-\inner{\A^*(s),\bW} + \frac{1}{2}\inner{\Theta^{\dagger}\bW,\bW}  - \inner{\bZ,\bW}\Big) = \bzero, \ \textup{and}\\
& \bW \in \range(\Theta)
\end{align}
which implies, 
\begin{align}
\Theta^{\dagger}\bW = \bZ + \A^{*}(s),\ \textup{subject to }\bW \in \range(\Theta)
\end{align}
Thus, the expression of the minimizer of $Q$ with respect to $\bW$ is 
\begin{equation}\label{eqn:optSol}
\bW=\Theta(\bZ + \A^{*}(s)).
\end{equation}
Plugging the results (\ref{eqn:optSol}) and (\ref{appendix:eqn:minAuxillary}) in the dual function (\ref{appendix:eqn:dualFunction}), we obtain 
\begin{align*}
q(\bZ,s;\Theta) =  - CL^{*}(\frac{-\bZ}{C}) - \frac{1}{2}\inner{\Theta(\bZ+\A^{*}(s)),\bZ+\A^{*}(s)}.
\end{align*}
Thus, the following partial dual problem is equivalent to the primal problem (\ref{eqn:genericPrimal0})
$$\minop_{\Theta\in \P^d}\ \maxop_{\bZ\in\R^{d\times T},s\in\R^{n}} q(\bZ,s;\Theta).$$
\end{proof}

Our next result gives the expression of an optimal solution  of the primal  problem (\ref{eqn:genericPrimal0}). 
\begin{lemma}\label{thm:optimalSol}
(Representer theorem) Let $\{\bar{\Theta},\bar{\bZ},\bar{s},\}$ be an optimal solution of (\ref{eqn:dual11}). Then, an optimal solution $\bar{\bW}$ of  (\ref{eqn:genericPrimal0}) with $\A(\bW)=\bzero$ constraint is: 
$$\bar{\bW}=\bar{\Theta}(\bar{\bZ}+\A^{*}(\bar{s})).$$
\end{lemma}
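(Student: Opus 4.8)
The plan is to derive the representer expression directly from the strong duality established in Theorem~\ref{thm:dual1} together with the closed-form minimizer already computed inside its proof. Recall that problem (\ref{eqn:genericPrimal0}) was shown to be equivalent to the nested minimization (\ref{eqn:squaredTraceNorm}), i.e.\ $\minop_{\Theta\in\P^d}\minop_{\bW}\bigl(CL(\bY,\bW)+\tfrac12\inner{\Theta^\dagger\bW,\bW}+i_{\range(\Theta)}(\bW)\bigr)$ subject to $\A(\bW)=\bzero$, and that for each fixed $\Theta$ the inner constrained problem (\ref{eqn:subproblem}) has dual function $q(\bZ,s;\Theta)$, with Theorem~\ref{thm:dual1} identifying $\maxop_{\bZ,s}q(\bZ,s;\Theta)$ with the inner primal value. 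So the first step is to fix $\Theta=\bar\Theta$, the optimal outer variable, and invoke the zero duality gap for the inner problem (which holds here because the inner objective is convex in $\bW$, the constraint $\A(\bW)=\bzero$ is affine and feasible, so Slater-type qualification for linear constraints applies).

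The second step is to extract the primal minimizer from the KKT/stationarity conditions. Inside the proof of Theorem~\ref{thm:dual1} it was shown that the $\bW$ that minimizes the Lagrangian $Q(\bW,\bU,\bZ,s)$ for given multipliers satisfies $\Theta^\dagger\bW=\bZ+\A^*(s)$ together with $\bW\in\range(\Theta)$, which yields $\bW=\Theta(\bZ+\A^*(s))$ as recorded in (\ref{eqn:optSol}). Evaluating this at the optimal multipliers $\bar\bZ,\bar s$ (and with $\Theta=\bar\Theta$) gives the candidate $\bar\bW=\bar\Theta(\bar\bZ+\A^*(\bar s))$. One then checks that this $\bar\bW$ is primal-feasible and optimal: feasibility $\A(\bar\bW)=\bzero$ follows because complementarity/stationarity in the dual forces the constraint residual to vanish at the optimum (equivalently, $\bar s$ is the multiplier for an active equality constraint, and the saddle-point conditions of (\ref{eqn:dual11}) pin $\bar\bW$ to the feasible set); optimality follows since $\bar\bW$ attains the inner primal value $q(\bar\bZ,\bar s;\bar\Theta)$ which equals the outer optimal value of (\ref{eqn:squaredTraceNorm}), hence of (\ref{eqn:genericPrimal0}).

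A cleaner way to organize the same argument, which I would actually write, is via the saddle-point characterization: $(\bar\Theta,\bar\bZ,\bar s)$ being optimal for (\ref{eqn:dual11}) means it is a saddle point of $f$, so in particular $\inner{\bar\Theta(\bar\bZ+\A^*(\bar s)),\cdot}$ reconstructs the optimal $\bar\bW$ of the inner Lagrangian minimization, and then the representer formula is just (\ref{eqn:optSol}) read off at the optimum. Finally, linking $\bar\Theta=\nicefrac{\sqrt{\bar\bW\bar\bW^\top}}{\trace(\sqrt{\bar\bW\bar\bW^\top})}$ via the variational identity (\ref{eqn:variationalThm}) confirms internal consistency of the formula, though that is not strictly needed for the statement.

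I expect the main obstacle to be the careful justification of \emph{why the dual optimal $(\bar\bZ,\bar s)$ plugged into (\ref{eqn:optSol}) produces a genuinely feasible and optimal primal point}, rather than merely a stationary point of the Lagrangian: this requires invoking strong duality for the inner problem (affine constraints, convex objective) to guarantee existence of a primal-dual optimal pair and that the reconstructed $\bar\bW$ lies in $\range(\bar\Theta)$ so that $\Theta^\dagger$ and $\Theta$ compose correctly. The range condition is the subtle point — one must argue that the minimizer of the inner problem indeed satisfies $\range(\bar\bW)\subseteq\range(\bar\Theta)$, which is exactly the indicator constraint $i_{\range(\Theta)}(\bW)$ that is built into (\ref{eqn:subproblem}) and was used to pass from $\Theta^\dagger\bW=\bZ+\A^*(s)$ to $\bW=\Theta(\bZ+\A^*(s))$; everything else is bookkeeping on top of the computation already done in the proof of Theorem~\ref{thm:dual1}.
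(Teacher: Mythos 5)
Your proposal is correct and follows essentially the same route as the paper: the paper's own proof simply reads the representer formula off the Lagrangian minimizer expression (\ref{eqn:optSol}) established inside the proof of Theorem~\ref{thm:dual1}, evaluated at the optimal $\{\bar{\Theta},\bar{\bZ},\bar{s}\}$. Your additional care about strong duality for the inner problem (affine constraints), primal feasibility via dual stationarity in $s$, and the range condition $\bar{\bW}\in\range(\bar{\Theta})$ only makes explicit what the paper leaves implicit, so no gap to report.
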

\begin{proof}
The above result is obtained from the proof of Theorem~\ref{thm:dual1}. In particular, see (\ref{eqn:optSol}). 
\end{proof}

\textbf{Remark 1:} $\Theta$ in (\ref{eqn:dual11}) is a positive semi-definite with unit trace. Hence, an optimal $\bar{\Theta}$ in (\ref{eqn:dual11}) is a low-rank matrix. 

\textbf{Remark 2:} Theorem \ref{thm:dual1} gives us an expression for an optimal solution $\bar{\bW}$ of (\ref{eqn:genericPrimal0}): it is a product of $\bar{\Theta}$ and $\bar{\bZ}+\A^{*}(\bar{s})$. 
The low-rank constraint is enforced through $\bar{\Theta}$, the loss-specific structure (encoded in $L^{*}$) is enforced through $\bar{\bZ}$, and the structural constraint is enforced through $\A^{*}(\bar{s})$. Overall, such a decoupling of constraints onto separate variables facilitates the use of simpler optimization techniques as compared to the case where all the constraints are enforced on a single variable. 

As discussed earlier, an optimal  $\bar{\Theta}$ of (\ref{eqn:dual11}) is a low-rank positive semi-definite matrix. However, an algorithm for (\ref{eqn:dual11}) need not produce intermediate iterates that are low rank. For large-scale optimization, this observation as well as other computational efficiency concerns motivate a fixed-rank parameterization of $\Theta$ as discussed in the following section.

\subsection{A new fixed-rank factorization of $\bW$}\label{sec:theta_spectrahedron}

We model $\Theta\in \P^d$ as a rank $r$ matrix in the following way: $\Theta=\bU\bU^\top$, where $\bU\in\R^{d\times r}$ and $\|\bU\|_F=1$. The proposed modeling has several benefits in large-scale low-rank matrix learning problems, where $r\ll \min\{d,T\}$ is a common setting. 
{\it First}, the parameterization ensures that $\Theta\in \P^d$ constraint is always satisfied. This saves the costly projection operations to ensure $\Theta\in\P^d$. 
Enforcing $\|\bU\|_F=1$ constraint costs $O(rd)$.  
{\it Second}, the dimension of the search space of problem (\ref{eqn:dual11}) with $\Theta=\bU\bU^\top$ is $rd - 1 - r(r-1)/2 $, which is much lower than the dimension ($d(d+1)/2 -1$) of $\Theta \in \P^d$. By restricting the search space for $\Theta$, we gain computational efficiency. 
{\it Third}, increasing the parameter $C$ in  (\ref{eqn:genericPrimal0}) and (\ref{eqn:dual11}) promotes low training error but high rank of the solution, and vice-versa. The proposed fixed-rank parameterization decouples this trade-off. 

\textbf{Remark 3:} With the proposed fixed-rank parameterization of $\Theta$, the expression for primal variable $\bW$ becomes $\bU \bU ^\top (\bZ + \A^*(s))$.

Instead of solving a minimax objective as in (\ref{eqn:dual11}), we solve a minimization problem after incorporating the $\Theta=\bU\bU^\top$ parameterization as follows:
\begin{align}\label{eqn:dual21}
\minop_{\bU\in \R^{d\times r},\|\bU\|_F=1}&\ \ g(\bU),
\end{align}
where the function $g$ is defined as 
\begin{align}
g(\bU) \coloneqq \maxop_{\bZ\in\R^{d\times T},s\in\R^n}\ -CL^{*}(-\bZ/C)-\frac{1}{2}\norm{\bU^\top(\bZ+\A^{*}(s))}_F^2.\label{eqn:dual22}
\end{align}

(\ref{eqn:dual21}) is the proposed {\it generic} structured low-rank matrix learning problem. The application-specific details are modeled within the sub-problem (\ref{eqn:dual22}). In Section~\ref{sec:specialized_formulations}, we present specialized versions of (\ref{eqn:dual22}), tailored for applications such as Hankel matrix learning, non-negative matrix completion, and robust-matrix completion. We propose a unified optimization framework for solving (\ref{eqn:dual21}) in Section~\ref{sec:optimizationAlgorithm}. 




\begin{algorithm*}[tb]
   \caption{\footnotesize Proposed first- and second-order algorithms for (\ref{eqn:dual21})}
   \label{alg:trust_region}
   {\footnotesize
   \begin{tabular}{ l | l }
  \multicolumn{2}{l}{{\bfseries Input:} matrix $\bY$, rank $r$,  regularization parameter $C$. }   \\
  \multicolumn{2}{l}{ Initialize $\bU\in\mathcal{S}^d_r$.}   \\
  \multicolumn{2}{l}{{\bfseries repeat}} \\
  \multicolumn{2}{l}{\ \ \ \ \ \textbf{1:} Solve for $\{\bZ,s\}$ by computing $g(\bU)$ in~(\ref{eqn:dual22}). Section~\ref{sec:specialized_formulations} discusses solvers for specific applications.  } \\
  \multicolumn{2}{l}{\ \ \ \ \ \textbf{2:} Compute $\nabla g(\bU)$ as given in Lemma \ref{lemma:gradient_Hessian}.}   \vspace{4pt}\\
   \ \ \ \ \ \multirow{3}{190pt}{\textbf{3:}  \textbf{Riemannian CG step:} compute a conjugate direction $\bV$ and step size $\alpha$ using Armijo line search. It makes use of  $\nabla g(\bU)$.} &  \multirow{3}{220pt}{\textbf{3:} \textbf{Riemannian TR step:} compute a search direction $\bV$ which minimizes the trust region sub-problem. It makes use of  $\nabla g(\bU)$ and its directional derivative.  Step size $\alpha=1$. } \\
   \ \ \ \ \ &  \\
   \ \ \ \ \ &  \\
   \ \ \ \ \ &  \vspace{4pt}\\
   \multicolumn{2}{l}{\ \ \ \ \ \textbf{4:} Update $\bU=(\bU+\alpha\bV)/\norm{\bU+\alpha\bV}_F$ (retraction step) } \\
  \multicolumn{2}{l}{{\bfseries until} convergence}   \\
  \multicolumn{2}{l}{{\bfseries Output:} $\{\bU,\bZ,s\}$ and $\bW=\bU\bU^\top(\bZ+\A^{*}(s))$.}
\end{tabular}
}
\end{algorithm*}


The fixed-rank parameterization, $\Theta=\bU\bU^\top$, results in non-convexity of the overall optimization problem (\ref{eqn:dual21}), though sub-problem (\ref{eqn:dual22}) is a convex optimization problem. 
We end this section by stating sufficient conditions of obtaining a globally optimal solution of (\ref{eqn:dual11}) from a solution of (\ref{eqn:dual21}). 
\begin{theorem}\label{prop:dualityGap}
Let $\hat{\bU}$ be a feasible solution of~(\ref{eqn:dual21}) and $\{\hat{\bZ},\hat{s}\}$ be an optimal solution of the convex  problem in (\ref{eqn:dual22}) at $\bU=\hat{\bU}$. Let $\sigma_1$ be the maximum singular of the  matrix $\hat{\bZ}+\A^{*}(\hat{s})$. A candidate solution for  (\ref{eqn:dual11}) is $\{\hat{\Theta},\hat{\bZ},\hat{s}\}$, where $\hat{\Theta}=\hat{\bU}\hat{\bU}^\top$. 
The duality gap ($\Delta$) associated with $\{\hat{\Theta},\hat{\bZ},\hat{s}\}$ is given by 
\begin{equation}
 \Delta = \frac{1}{2}\Big(\sigma_1^2 - \norm{\hat{\bU}^\top(\hat{\bZ}+\A^{*}(\hat{s}))}_F^2\Big).\nonumber
\end{equation}
Furthermore, if $\hat{\bU}$ is a rank deficient local minimum of~(\ref{eqn:dual21}), then $\{\hat{\Theta},\hat{\bZ},\hat{s}\}$ is a global minimum of (\ref{eqn:dual11}), i.e., $\Delta=0$. 
\end{theorem}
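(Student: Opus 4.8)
The plan is to compute the duality gap directly by comparing the value of the inner problem at the candidate point with the primal value it generates, and then to argue that rank deficiency forces this gap to vanish. First I would recall from Lemma~\ref{thm:optimalSol} that the pair $\{\hat\Theta,\hat\bZ,\hat s\}$ generates the primal-feasible candidate $\hat\bW = \hat\Theta(\hat\bZ+\A^{*}(\hat s)) = \hat\bU\hat\bU^\top(\hat\bZ+\A^{*}(\hat s))$. The optimality of $\{\hat\bZ,\hat s\}$ in the convex subproblem (\ref{eqn:dual22}) ensures that $\hat\bW$ actually satisfies the constraint $\A(\hat\bW)=\bzero$ (this is the dual feasibility/KKT condition attached to the multiplier $s$), so $\hat\bW$ is primal feasible. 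The primal objective at $\hat\bW$ is $CL(\bY,\hat\bW)+\norm{\hat\bW}_{*}^2$, and by weak duality this upper-bounds nothing useful by itself — instead I would evaluate $f(\hat\Theta,\hat\bZ,\hat s)$ and show that the primal value minus this dual value equals exactly $\frac12(\sigma_1^2 - \norm{\hat\bU^\top(\hat\bZ+\A^{*}(\hat s))}_F^2)$.

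The cleaner route for the gap formula is via the variational characterization (\ref{eqn:variationalThm}): for the feasible $\hat\bW$ we have $\norm{\hat\bW}_{*}^2 \le \inner{\hat\Theta^{\dagger}\hat\bW,\hat\bW}$ since $\hat\Theta$ is merely one feasible choice in the minimization (one must check $\range(\hat\bW)\subseteq\range(\hat\Theta)$, which holds because $\hat\bW=\hat\Theta(\cdots)$). Plugging $\hat\bW=\hat\Theta(\hat\bZ+\A^{*}(\hat s))$ and using $\hat\Theta^\dagger\hat\Theta\hat\Theta^\dagger = \hat\Theta^\dagger$ gives $\inner{\hat\Theta^{\dagger}\hat\bW,\hat\bW} = \inner{\hat\Theta(\hat\bZ+\A^{*}(\hat s)),\hat\bZ+\A^{*}(\hat s)} = \norm{\hat\bU^\top(\hat\bZ+\A^{*}(\hat s))}_F^2$. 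Meanwhile, the exact (tight) value of the squared nuclear norm of $\hat\bW$ equals $(\trace\sqrt{\hat\bW\hat\bW^\top})^2$; writing $N \coloneqq \hat\bZ+\A^{*}(\hat s)$ and using $\hat\bW = \hat\bU\hat\bU^\top N$ together with $\norm{\hat\bU}_F=1$, one identifies this with $\sigma_1^2$ where $\sigma_1$ is the largest singular value of $N$ — this is the place where the assumption that $\sigma_1$ is defined as the top singular value of $N$ does the work, and it is essentially the statement that the optimal $\Theta$ in (\ref{eqn:variationalThm}) would concentrate all its unit trace on the leading left singular direction of $N$. Subtracting the two expressions for the two sides of (\ref{eqn:variationalThm}) yields $\Delta = \frac12(\sigma_1^2 - \norm{\hat\bU^\top N}_F^2)$. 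Since $f(\hat\Theta,\hat\bZ,\hat s) = -CL^{*}(-\hat\bZ/C) - \frac12\norm{\hat\bU^\top N}_F^2$ and the primal value at $\hat\bW$ equals $CL(\bY,\hat\bW)+\sigma_1^2$, Fenchel--Young equality at the optimal $\hat\bZ$ (i.e. $CL(\bY,\hat\bW) + CL^{*}(-\hat\bZ/C) = -\inner{\hat\bZ,\hat\bW}$, combined with the stationarity relation $\hat\Theta^\dagger\hat\bW = N$) collapses the cross terms and leaves precisely $\Delta = \frac12(\sigma_1^2 - \norm{\hat\bU^\top N}_F^2)$, matching the formula.

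For the second assertion, I would use a first-order/second-order optimality argument on the spectrahedron. If $\hat\bU\in\R^{d\times r}$ is rank deficient, then (after an orthogonal change of basis) it has a zero column, so there is room to move within the manifold $\{\bU : \norm{\bU}_F=1\}$ toward the leading left singular vector $u_1$ of $N$ without leaving the feasible set and without changing the rank budget. Concretely, perturbing the zero column of $\hat\bU$ in the direction $u_1$ increases $\norm{\bU^\top N}_F^2$ at first order unless $u_1$ already lies in $\range(\hat\bU)$, equivalently unless $\norm{\hat\bU^\top N}_F^2 = \sigma_1^2$ (one checks the directional derivative of $\bU\mapsto -\frac12\norm{\bU^\top N}_F^2$ along this tangent direction, noting that by Danskin/envelope the optimal $\{\hat\bZ,\hat s\}$ can be held fixed to obtain a valid descent bound). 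Since $\hat\bU$ is a local minimum of $g$, no such improving direction can exist, forcing $\norm{\hat\bU^\top N}_F^2 = \sigma_1^2$ and hence $\Delta=0$; by weak duality this makes $\{\hat\Theta,\hat\bZ,\hat s\}$ a global minimizer of (\ref{eqn:dual11}).

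The main obstacle I anticipate is the rigorous handling of the envelope/Danskin step in the last paragraph: $g$ is a max over $\{\bZ,s\}$ of a smooth function, so it is only directionally differentiable in general, and one must justify that evaluating the inner objective's directional derivative at the fixed maximizer $\{\hat\bZ,\hat s\}$ gives a legitimate one-sided derivative of $g$ (it gives an upper bound on the directional derivative, which is exactly the inequality one needs for the local-minimum contradiction). A secondary technical point is verifying the range inclusion $\range(\hat\bW)\subseteq\range(\hat\Theta)$ and the clean identification $(\trace\sqrt{\hat\bW\hat\bW^\top})^2 = \sigma_1^2$ when $\hat\bU$ is not necessarily aligned with $u_1$ — here $\hat\bW\hat\bW^\top = \hat\bU\hat\bU^\top N N^\top \hat\bU\hat\bU^\top$ need not be a rank-one matrix, so the identity with $\sigma_1^2$ must be read as defining $\sigma_1$ appropriately, or the statement interpreted as the gap between the tight value and the surrogate value; I would double-check the intended reading against the ICML version before finalizing.
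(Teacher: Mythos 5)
Your derivation of the gap formula does not go through, and the difficulty you flag at the end is not a matter of "intended reading" --- it is where the argument breaks. The $\Delta$ in the theorem is the saddle-point gap of the parameterized problem (\ref{eqn:dual11}) at the candidate $\{\hat\Theta,\hat\bZ,\hat s\}$, not the primal--dual gap of (\ref{eqn:genericPrimal0}) at $\hat\bW$; these do not coincide at an arbitrary candidate. The paper's computation is: write $G_1(\Theta)\coloneqq\maxop_{\bZ,s}f(\Theta,\bZ,s)$, so that $G_1(\hat\Theta)=f(\hat\Theta,\hat\bZ,\hat s)=-CL^{*}(-\hat\bZ/C)-\frac12\norm{\hat\bU^\top N}_F^2$ with $N\coloneqq\hat\bZ+\A^{*}(\hat s)$; then, after the min--max interchange, $G_2(\bZ,s)=-CL^{*}(-\bZ/C)-\frac12\maxop_{\Theta\in\P^d}\inner{\Theta,NN^\top}$, and the key step is that the inner maximization of a linear functional over the unit-trace PSD matrices equals $\lambda_{\max}(NN^\top)=\sigma_1^2$ (spectral norm as the dual of the trace norm). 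Subtracting, the $L^{*}$ terms cancel and the formula follows. Your route instead needs $\norm{\hat\bW}_{*}^2=(\trace\sqrt{\hat\bW\hat\bW^\top})^2=\sigma_1^2$ for $\hat\bW=\hat\bU\hat\bU^\top N$, which is false in general: the singular values of $\hat\bU\hat\bU^\top N$ have nothing to do with the top singular value of $N$ unless $\range(\hat\bU)$ already contains the leading left singular vector of $N$ --- which is exactly the condition ($\Delta=0$) you would be trying to prove.

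The second part also has two concrete problems. First, with $\hat\bU e_r=\bzero$, the first-order change of $\norm{\bU^\top N}_F^2$ along $\bV=u_1e_r^\top$ is $2\,u_1^\top NN^\top\hat\bU e_r=0$; the effect is genuinely second order (the induced curve is $\Theta(t)=(\hat\Theta+t^2u_1u_1^\top)/(1+t^2)$), so a first-order perturbation argument cannot detect it. Second, your Danskin step has the inequality backwards: for a max-function, freezing the maximizer $\{\hat\bZ,\hat s\}$ gives a \emph{minorant} of $g$ touching at $\hat\bU$, hence a lower bound on the one-sided directional derivative, not an upper bound; showing that the frozen objective decreases does not exhibit a descent direction for $g$. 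The argument that actually works (and is what the cited result of Journ\'ee et al.\ encapsulates, which is all the paper does here) uses that $G_1$ is convex in $\Theta$ and differentiable at $\hat\Theta$ with $\nabla G_1(\hat\Theta)=-\frac12NN^\top$ by Danskin: local minimality of $\hat\bU$ along the feasible curve $\Theta(t)$ above forces $y^\top NN^\top y\le\inner{NN^\top,\hat\Theta}$ for every unit vector $y$, i.e.\ $\sigma_1^2\le\norm{\hat\bU^\top N}_F^2$, while $\sigma_1^2\ge\norm{\hat\bU^\top N}_F^2$ always holds since $\trace(\hat\Theta)=1$; hence $\Delta=0$.
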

\begin{proof}
The min-max problem (\ref{eqn:dual11}) can be equivalently re-written as 
\begin{equation}
\minop_{\Theta\in\P^d} G_1(\Theta),
\end{equation}
where 
\begin{equation}
G_1(\Theta)\coloneqq \maxop_{s\in\R^n}\maxop_{\bZ\in\R^{d\times T}}  - CL^{*}(\frac{-\bZ}{C}) - \frac{1}{2}\inner{\Theta(\bZ+\A^{*}(s)),\bZ+\A^{*}(s)}.
\end{equation}
Given $\{\hat{\Theta}=\hat{\bU}\hat{\bU}^\top\}$ as described in the statement of the theorem, $G_1(\hat{\Theta})=f(\hat{\Theta},\hat{\bZ},\hat{s})$. 

Using the min-max interchange~\citep{Sion58}, the max-min problem equivalent to (\ref{eqn:dual11}) can be written as  
\begin{equation}
\maxop_{s\in\R^n}\maxop_{\bZ\in\R^{d\times T}}  G_2(\bZ,s),
\end{equation}
where
\begin{align}
&  G_2(\bZ,s)\coloneqq -CL^{*}(\frac{-\hat{\bZ}}{C})  - \frac{1}{2}B(\bZ,s),\ \textup{and}\\
 & B(\bZ,s)=\maxop_{\Theta\in \P^d}\inner{{\Theta},({\bZ}+\A^{*}({s}))({\bZ}+\A^{*}({s}))^\top}.\label{appendix:eqn:traceDual}
\end{align}
Note that problem (\ref{appendix:eqn:traceDual}) is a well studied problem in the duality theory. It is one of the definitions of the spectral norm (maximum eigenvalue of a matrix) -- as the dual of the trace norm~\citep{Boyd04}. Its optimal value is the spectral norm of the matrix $({\bZ}+\A^{*}({s}))({\bZ}+\A^{*}({s}))^\top$~\citep{Boyd04}. 

Let $\sigma_1$ be the maximum singular of the matrix $(\hat{\bZ}+\A^{*}(\hat{s}))$. Then, the spectral norm of a symmetric matrix $(\hat{\bZ}+\A^{*}(\hat{s}))(\hat{\bZ}+\A^{*}(\hat{s}))^\top$ is equal to $\sigma_1^2$.  

The duality gap ($\Delta$) associated with $\{\hat{\Theta},\hat{\bZ},\hat{s}\}$   is given by 
\begin{align}
\Delta&=G_1(\hat{\Theta}) - G_2(\hat{\bZ},\hat{s})\nonumber \\
 &=\frac{1}{2}\Big(\sigma_1^2 - \norm{\hat{\bU}^\top(\hat{\bZ}+\A^{*}(\hat{s}))}_F^2\Big).\nonumber
\end{align}

Last part of the theorem: it is a special case of the general result proved by~\citet[Theorem~7~and~Corollary 8]{journee10a}. 

\end{proof}

The value of the duality gap $\Delta$ can be used as  to verify whether a candidate solution $\{\hat{\Theta},\hat{\bZ},\hat{s}\}$ is a global optimum of (\ref{eqn:dual11}). 
The cost of computing $\sigma_1$ is computationally cheap as it requires only a few {\it power iteration} updates.

\section{Optimization on spectrahedron manifold}\label{sec:optimizationAlgorithm}
The matrix $\bU$ lies in, what is popularly known as, the \emph{spectrahedron} manifold $\mathcal{S}^d_r \coloneqq \{ \bU \in \mathbb{R}^{d\times r}: \|\bU \|_F = 1 \}$. Specifically, the spectrahedron manifold has the structure of a compact Riemannian quotient manifold \citep{journee10a}. The quotient structure takes the rotational invariance of the constraint $\|\bU \|_F = 1$ into account. The Riemannian optimization framework embeds the constraint $\bU\in\mathcal{S}^d_r $ into the search space, conceptually translating the constrained optimization problem (\ref{eqn:dual21}) into an {\it unconstrained} optimization over the spectrahedron manifold. The Riemannian optimization framework generalizes various classical first- and second-order Euclidean algorithms (e.g., the conjugate gradient  and trust region algorithms) to manifolds and provide concrete convergence guarantees \citep{edelman98a,absil08a,journee10a,sato13a,sato17a}. In particular, \citet{absil08a} provide a systematic way of implementing Riemannian conjugate gradient (CG) and trust region (TR) algorithms. 
The particular details are provided in Appendix \ref{sec:optSpectrahedron}. 

We implement the  Riemannian conjugate gradient (CG) and trust-region (TR) algorithms for (\ref{eqn:dual21}). These require the notions of the {\it Riemannian gradient} (first-order derivative of the objective function on the manifold), {\it Riemannian Hessian} along a search direction (the \emph{covariant} derivative of the Riemannian gradient along a tangential direction on the manifold), and the {\it retraction} operator (that ensures that we always stay on the manifold). 
The Riemannian gradient and Hessian notions require computations of the standard (Euclidean) gradient  and the directional derivative of this gradient along a given search direction, which are expressed in the following lemma. 
\begin{lemma}\label{lemma:gradient_Hessian}
Let $\{\hat{\bZ},\hat{s}\}$ be an optimal solution of the convex problem (\ref{eqn:dual22}) at $\bU$. 
Then, the gradient of $g(\bU)$ at $\bU$ is given by the following expression: 
\begin{equation}
\nabla {g}(\bU)  = -  (\hat{\bZ} + \A^{*}(\hat{s}))(\hat{\bZ} + \A^{*}(\hat{s}))^\top\bU.\nonumber
\end{equation}
Let  ${\rm D} \nabla g (\bU) [\bV] $ denote the directional derivative of the gradient $\nabla {g}(\bU)$ along $\bV \in \R^{d\times r}$. 
Let $\{\dot{\bZ},\dot{s}\}$ denote the directional derivative of $\{\bZ,s\}$ along $\bV$ at $\{\hat{\bZ},\hat{s}\}$. 
Then, 
\begin{align}
{\rm D} \nabla g (\bU) [\bV] = (\dot{\bZ} + \A^{*}(\dot{s}))(\hat{\bZ} + \A^{*}(\hat{s}))^\top\bU +  (\hat{\bZ} + \A^{*}(\hat{s}))((\dot{\bZ} + \A^{*}(\dot{s}))^\top\bU- (\hat{\bZ} + \A^{*}(\hat{s}))^\top\bV).\nonumber
\end{align}
\end{lemma}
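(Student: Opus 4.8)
The plan is to compute both the gradient and its directional derivative by starting from an envelope-type characterization of $g$. Since $g(\bU) = f(\bU\bU^\top, \hat{\bZ}, \hat{s})$ where $\{\hat{\bZ}, \hat{s}\}$ is the maximizer of the inner convex problem~(\ref{eqn:dual22}), I would invoke Danskin's theorem (the envelope theorem): because the inner maximizer is attained and the inner objective is smooth jointly, the gradient of $g$ equals the partial gradient of the inner objective with respect to $\bU$, evaluated at the optimal $\{\hat{\bZ}, \hat{s}\}$, with the implicit dependence of $\{\hat{\bZ}, \hat{s}\}$ on $\bU$ contributing nothing to first order. The inner objective is $-CL^{*}(-\bZ/C) - \tfrac{1}{2}\norm{\bU^\top(\bZ + \A^{*}(s))}_F^2$; only the second term depends on $\bU$. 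Writing $\norm{\bU^\top M}_F^2 = \tr(\bU^\top M M^\top \bU)$ with $M = \hat{\bZ} + \A^{*}(\hat{s})$, differentiating with respect to $\bU$ gives $\nabla g(\bU) = -M M^\top \bU$, which is the claimed expression.

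For the directional derivative ${\rm D}\nabla g(\bU)[\bV]$, the key point is that now I \emph{cannot} ignore the dependence of $\{\hat{\bZ}, \hat{s}\}$ on $\bU$: differentiating the gradient map $\bU \mapsto -M(\bU) M(\bU)^\top \bU$ a second time picks up terms from how $M$ varies. Let $\dot M := \dot{\bZ} + \A^{*}(\dot s)$ denote the directional derivative of $M$ along $\bV$ (which exists and is well-defined under a nondegeneracy/strong-convexity assumption on the inner problem, guaranteeing $\{\hat{\bZ}, \hat{s}\}$ is a smooth function of $\bU$ — this is the point that should be flagged as needing justification). Then by the product rule applied to $-M M^\top \bU$, treating $M = M(\bU)$ and the trailing $\bU$ as the three factors, I get three terms: $-\dot M M^\top \bU$, $-M \dot M^\top \bU$, and $-M M^\top \bV$. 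Comparing with the stated formula, the claimed expression is $(\dot{\bZ} + \A^{*}(\dot s))(\hat{\bZ} + \A^{*}(\hat s))^\top \bU + (\hat{\bZ} + \A^{*}(\hat s))\big((\dot{\bZ} + \A^{*}(\dot s))^\top \bU - (\hat{\bZ} + \A^{*}(\hat s))^\top \bV\big)$, i.e. $\dot M M^\top \bU + M \dot M^\top \bU - M M^\top \bV$, which matches my three terms up to an overall sign.

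This sign discrepancy is the one genuine subtlety to resolve, and I suspect it is a sign convention in how the directional derivative $\{\dot{\bZ}, \dot s\}$ is defined relative to the sign of $\nabla g$: since $g$ is a \emph{max} of a concave-in-$\{\bZ,s\}$ objective, the optimal $\{\bZ, s\}$ satisfies a first-order stationarity condition, and implicitly differentiating that condition relates $\dot M$ to $\bV$ with a sign that must be tracked carefully. Concretely, I would (i) write the stationarity conditions for the inner maximizer, (ii) differentiate them along $\bV$ to obtain the linear system defining $\{\dot{\bZ}, \dot s\}$, and (iii) substitute into the second-order expansion of $\nabla g$. The main obstacle is thus exactly step (ii)–(iii): carrying out the implicit differentiation of the inner optimality system cleanly enough that the sign and all three terms emerge correctly, rather than the differentiation of $M M^\top \bU$ itself, which is routine. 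A secondary point worth stating explicitly is the hypothesis ensuring differentiability of the map $\bU \mapsto \{\hat{\bZ}, \hat{s}\}$ (e.g. strong concavity of the inner objective, which holds when $L^{*}$ is smooth and strongly convex and $\bU$ has full column rank); without it the directional derivative $\{\dot{\bZ}, \dot s\}$ in the statement is not well-defined.
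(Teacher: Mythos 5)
Your overall route is the same as the paper's: its entire proof consists of invoking Danskin's theorem for $\nabla g(\bU)$ and saying the directional derivative ``follows directly from the chain rule'', with $\{\dot{\bZ},\dot{s}\}$ obtained by implicitly differentiating the first-order conditions of (\ref{eqn:dual22}) --- exactly your steps (i)--(iii). Your gradient computation is correct and complete, and flagging the differentiability of $\bU\mapsto\{\hat{\bZ},\hat{s}\}$ as a needed hypothesis is a legitimate point the paper glosses over.

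The gap is in the second half, which you leave unfinished and whose concluding comparison is wrong. Write $M:=\hat{\bZ}+\A^{*}(\hat{s})$ and $\dot{M}:=\dot{\bZ}+\A^{*}(\dot{s})$. Your chain rule gives $-\dot{M}M^\top\bU-M\dot{M}^\top\bU-MM^\top\bV$, while the lemma asserts $\dot{M}M^\top\bU+M\dot{M}^\top\bU-MM^\top\bV$. These do \emph{not} agree ``up to an overall sign'': negating your expression turns the last term into $+MM^\top\bV$, which is not what the lemma states. The two displays coincide precisely when $\dot{M}$ is replaced by $-\dot{M}$, i.e.\ the discrepancy is confined to the sign convention for $\{\dot{\bZ},\dot{s}\}$ alone, not to a global sign. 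Moreover, the resolution you defer to steps (ii)--(iii) does not deliver the printed formula: differentiating the stationarity conditions of (\ref{eqn:dual22}) along $\bV$ produces exactly the linear system the paper displays immediately after the lemma, and that system determines $\{\dot{\bZ},\dot{s}\}$ in the standard convention (genuine directional derivatives of the maximizers); substituting back then returns your chain-rule expression, not the lemma's. So ``tracking the sign through the KKT system'' cannot reconcile the two as you hope --- one must either adopt the opposite sign convention for $\{\dot{\bZ},\dot{s}\}$ explicitly (in which case your three terms and the lemma's agree term by term), or accept that under the stated definition the chain rule yields your formula. As written, your proposal proves the gradient identity but leaves the directional-derivative identity unestablished, with the final matching step incorrect.
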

\begin{proof}
 The gradient is computed by employing the Danskin's theorem~\citep{Bertsekas99,Bonnans00}. The directional derivative of the gradient follows directly from the chain rule. 
\end{proof}
The terms $\{\dot{\bZ},\dot{s}\}$ are computed from the first-order KKT conditions of the convex problem (\ref{eqn:dual22}) at $\{\hat{\bZ},\hat{s}\}$. In particular, when $L^{*}(\cdot)$ is differentiable (e.g., the square loss), $\{\dot{\bZ},\dot{s}\}$ are obtained by solving the following linear system:
{\small
\begin{align*}
 &{\rm D} \nabla L^{*}(-\bZ/C)[\dot{\bZ}] - (\bU\bV^\top + \bV\bU^\top)(\hat{\bZ}+\A^{*}(\hat{s})) - \bU\bU^\top(\dot{\bZ}+\A^{*}(\dot{s}))  = \bzero\\
&\A\big((\bU\bV^\top + \bV\bU^\top)(\hat{\bZ}+\A^{*}(\hat{s})) + \bU\bU^\top(\dot{\bZ}+\A^{*}(\dot{s}))\big)=\bzero.
\end{align*}
}In various applications such as the traditional matrix completion or multi-variate regression, both the gradient $\nabla { g}(\bU)$ and its directional derivation ${\rm D} \nabla g (\bU) [\bV]$ can be computed by closed-form expressions.

\textbf{Riemannian CG algorithm:} It computes the Riemannian \emph{conjugate} gradient direction by employing the first-order information $\nabla { g}(\bU)$ (Lemma~\ref{lemma:gradient_Hessian}). We perform \emph{Armijo} line search on $\mathcal{S}^d_r$ to compute a step-size that sufficiently decreases $g(\bU)$ on the manifold. We update along the conjugate direction with the step-size by {\it retraction}.

\textbf{Riemannian TR algorithm:} It solves a Riemannian trust-region \emph{sub-problem} (in a neighborhood) at every iteration. Solving the trust-region sub-problem leads to a search direction that minimizes a {\it quadratic} approximation of $g(\bU)$ on the manifold. Solving this sub-problem does not require inverting the full Hessian of the objective function. It makes use of $\nabla { g}(\bU)$ and its directional derivative  ${\rm D} \nabla g (\bU) [\bV]$ (Lemma~\ref{lemma:gradient_Hessian}). 

\textbf{Overall algorithm:} Algorithm \ref{alg:trust_region} summarizes the proposed first- and second-order algorithms for solving (\ref{eqn:dual21}).  

\textbf{Convergence:} \citet{absil08a,sato13a} discuss the rate of convergence analysis of manifold algorithms, and their results are directly applicable in the present setting. The global convergence results for conjugate gradient algorithm is discussed in \citep{sato13a}. For trust regions, the global convergence to a first-order critical point is discussed in \citep{absil08a}[Section~7.4.1], while the local convergence to local minima is discussed in \citep{absil08a}[Section~7.4.2]. 

\textbf{Computational complexity:} The computational complexity of Algorithm \ref{alg:trust_region} is the sum of the cost of manifold related operations and the cost of application specific ingredients. The spectrahedron manifold operations cost $O(dr + r^3)$. The following section discusses the application specific computational costs.

Although we have focused on batch algorithms, our framework can be extended to stochastic settings, {\it e.g.}, when the columns are streamed one by one \citep{bonnabel13a}. In this case, when a new column is received, we perform a (stochastic) gradient update on $\mathcal{S}_r^d$. 




\section{Specialized formulations for applications} \label{sec:specialized_formulations}
The expression of $g(\bU)$ in (\ref{eqn:dual22}) depends on the functions $L(\cdot)$ and $\A(\cdot)$ employed in the application at hand. Below, we discuss $g(\bU)$ for popular applications. 

\subsection{Matrix completion}\label{sec:mcformulation}
Given a partially observed matrix $\bY$ at indices $\Omega$, we learn the full matrix $\bW$ \citep{toh10a,cai10a}. Let $\Omega_t$ be the set of indices that are observed in $y_t$, the $t^{th}$ column of $\bY$. Let $y_{t_{\Omega_t}}$ and $\bU_{\Omega_t}$ represents the rows of $y_t$ and $\bU$, respectively, that correspond to the indices in $\Omega_t$. Then, the function $g(\bU)$ in (\ref{eqn:dual22}), for the standard low-rank matrix completion problem with square loss, is as follows:
\begin{align}\label{eqn:mc}
g(\bU) = \sum_{t=1}^T\maxop_{z_t\in\R^{|\Omega_t|}}\ \inner{y_{t_{\Omega_t}},z_t} -\frac{1}{4C}\norm{z_t}^2-\frac{1}{2}\norm{\bU_{\Omega_t}^\top z_t}^2. 
\end{align}
Problem (\ref{eqn:mc}) is a least-squares problem for each $z_t$ and can be solved efficiently in closed-form by employing the Woodbury matrix-inversion identity. The computational cost of solving problem (\ref{eqn:mc}) is $O(|\Omega| r^2)$. Overall, the per-iteration computational cost of Algorithm \ref{alg:trust_region} is $O(|\Omega| r^2 + dr + r^3 )$. Problem (\ref{eqn:mc}) can be solved in parallel for each $t$, making it amenable to parallelization.


\subsection{Robust matrix completion}\label{sec:rmcformulation}
The setting of this problem is same as the low-rank matrix completion problem. Following  \citet{Cambier16}, we employ a {\it robust} loss function ($\ell_1$-loss) instead of the square loss. 
The expression for $g(\bU)$, in our case, is as follows: 
\begin{equation*}
	g(\bU) = \sum_{t=1}^T\ \maxop_{z_t\in[-C,C]^{|\Omega_t|}}\ \inner{y_{t_{\Omega_t}},z_t} -\frac{1}{2}\norm{\bU_{\Omega_t}^\top z_t}^2. 
\end{equation*}
Coordinate descent algorithm (CD) is employed to efficiently solve the above problem. The computational cost depends on the number of iterations of the coordinate descent algorithm. In particular, if $k$ is the number of iterations, then the cost of computing $g(\bU)$ is $O(|\Omega|kr^2)$. 
A small value of $k$ suffices for a good approximation to the solution. 
Hence, for the case of robust matrix completion problem, the per-iteration computational cost of Algorithm \ref{alg:trust_region} is $O(|\Omega| kr^2 + dr + r^3 )$. It should be noted that the computation of $g(\bU)$ can be a parallelized across $t = \{1, \ldots, T \}$.
In addition to the $\ell_1$-loss, we also experimented with the $\epsilon$-SVR loss \citep{Ho12} for this problem.

\subsection{Non-negative matrix completion}\label{sec:nnmcformulation}
Given a partially observed matrix $\bY$ at indices $\Omega$, the aim here is to learn the full matrix $\bW$ with non-negative entries only. 
For this problem, the function $g(\bU)$ with the square loss function is as follows:
\begin{align}\label{eqn:nnmc}
g(\bU) = \sum_{t=1}^T\ \maxop_{s_t\in[0,\infty)^d}\ \bigg(\maxop_{z_t\in\R^{|\Omega_t|}}\  \inner{y_{t_{\Omega_t}},z_t} -\frac{1}{4C}\norm{z_t}^2-\frac{1}{2}\norm{\bU_{\Omega_t}^\top z_t+\bU^\top s_t}^2\bigg).
\end{align}
Since the dual variables $s_t$ have non-negative constraints (due to the constraint $\bW\geq0$ in primal problem), we model (\ref{eqn:nnmc}) as a non-negative least squares (NNLS) problem. 
We employ the NNLS algorithm of \citet{kim13a} to solve for the variable $s_t$ in (\ref{eqn:nnmc}). In each iteration of NNLS, $z_t$ is solved in closed form. 
If $k$ is the number of iterations of NNLS, then the cost of computing $g(\bU)$ is $O(dTkr + |\Omega| k r^2)$. Hence, for the case of non-negative matrix completion problem, the per-iteration computational cost of Algorithm \ref{alg:trust_region} is $O(dT kr + |\Omega| k r^2+ dr + r^3 )$. In our experiments, we observe that a small value of $k$ is sufficient to obtain a good approximation of (\ref{eqn:nnmc}). 


It should be noted that (\ref{eqn:nnmc}) is computationally challenging as it has $dT$ entry-wise non-negativity constraints. In our initial experiments, we observe that the solution $[s_1,\ldots,s_T]$ is highly sparse. For large-scale problems, we exploit this observation for an efficient implementation.



\subsection{Hankel matrix learning}\label{sec:hmlformulation}
Hankel matrices have the structural constraint that its anti-diagonal entries are the same. A Hankel matrix corresponding to a vector $y=[y_1,y_2,\ldots,y_7]$ is as follows: 
{
\begin{equation*}\label{eqn:HankelMatrixExample}
\left[ 
\begin{array}{ccccc}
y_1 & y_2 & y_3 & y_4 & y_5\\
y_2 & y_3 & y_4 & y_5 & y_6\\
y_3 & y_4 & y_5 & y_6 & y_7
\end{array}
\right].
\end{equation*}}%
Hankel matrices play an important role in determining the order or complexity of various linear time-invariant systems~\citep{Fazel13,markovsky13a}. The aim in such settings is to find the minimum order of the system that explains the observed data (i.e. low error). A low-order system is usually desired as it translates into cost benefit and ease of analysis. Hence, the optimization problem can be modeled as problem (\ref{eqn:genericPrimal0}) where we wish to learn a low-rank Hankel matrix that results in low error. 


The function $g(\bU)$ specialized for the case of low-rank Hankel matrix learning with square loss is as follows: 
\begin{align}
g(\bU) = \maxop_{s_t\in\R^{d}\forall t,z\in\R^{d+T-1}} z^\top y - \frac{1}{4C}\norm{z}^2 - \frac{1}{2}\sum_{t=1}^T\norm{\bU^\top s_t}^2\nonumber\\
{\rm subject\ to:}\sum_{\substack{(i,t):i+t=k,\\1\leq i\leq d,1\leq t\leq T}}s_{ti}=z_k\ \forall k=2,\ldots,d+T\nonumber.
\end{align}
We solve the above problem via a conjugate gradient algorithm. The equality constraints are handled efficiently by using an affine projection operator. The computational cost of computing $g(\bU)$ is $O(dTkr)$, where $k$ is the number of iterations of the inner conjugate gradient algorithm. Hence, for the case of Hankel matrix learning problem, the per-iteration complexity of Algorithm \ref{alg:trust_region} is $O(dT kr +  dr + r^3 )$.


\subsection{Multi-task feature learning}\label{sec:mtflformulation}
The paradigm of multi-task feature learning advocates learning several {\it related} tasks (regression/classification problems) jointly by sharing a low-dimensional latent feature representation across all the tasks \citep{Argyriou08}. Since the tasks are related, learning them together, by sharing {\it knowledge}, is expected to obtain better generalization than learning them independently. 

Multi-task feature learning setup is as follows: we are given $T$ tasks and the aim is to learn the model parameter $w_t$ for each task $t$ such that they share a low-dimensional latent feature space. \citet{Argyriou08} employ the trace-norm regularizer  on the model parameter matrix $\bW=[w_1,\ldots,w_T]$ to enforce a low-dimensional representation of $\bW$. Each task $t$ has an input/output training data set  $\{\bX_t,y_t\}$, where $\bX_t \in \mathbb{R}^{n_t \times d}$ and $y_t \in \mathbb{R}^{n_t}$. The prediction function for task $t$ is $h_t(x)=\inner{x,w_t}$. 

The proposed generic formulation for structured low-rank matrix learning (\ref{eqn:dual21}) can easily be specialized to this setting, with the function $g(\bU)$ being as follows:
\begin{align}\label{eqn:mtl}
g(\bU) = \sum_{t=1}^T \maxop_{ z_t  \in \R^{n_t}}\ \inner{y_t,z_t} - \frac{1}{4C}\norm{z_t}^2 - \frac{1}{2}\norm{\bU^\top\bX_t^\top z_t}^2\nonumber. 
\end{align}
The function $g(\bU)$  can be computed in closed form and costs $O(r(r+d)(\sum_t n_t) + r^3T)$. Hence, the per-iteration complexity of Algorithm \ref{alg:trust_region} for MTFL is $O(r(r+d)(\sum_t n_t) + r^3T +  dr + r^3 )$. The computation of $g(\bU)$ can be a parallelized across the tasks. This setting can be further specialized for problems such as matrix completion with side information, inductive matrix completion, and multi-variate regression.

\section{Experiments}\label{sec:empiricalResults}
In this section, we evaluate the generalization performance as well as computational efficiency of our approach against state-of-the-art in different applications. It should be emphasized that state-of-the-art in each application are different and to the best of our knowledge there does not exist a unified framework for solving such applications. 
All our algorithms are implemented using the Manopt toolbox~\citep{boumal14a}. We term our algorithm as \textbf{R}iemannian \textbf{S}tructured \textbf{L}ow-rank \textbf{M}atrix learning (\textbf{RSLM}). 
\begin{table}\centering
{\small
\centering
\caption{Data set statistics}\label{table:datasetnnmc}
\begin{tabular}{llll}
\toprule
Data set  & \multicolumn{1}{c}{$d$} & \multicolumn{1}{c}{$T$} & \multicolumn{1}{c}{$|\Omega|$}\\
\midrule 
MovieLens1m (ML1m) & $3\,706$ & $6\,040$ & $1\,000\,209$\\
MovieLens10m (ML10m) & $10\, 677$ & $71\, 567$ & $10\, 000\, 054$ \\
MovieLens20m (ML20m) & $26\, 744$ & $138\, 493$ & $20\, 000\, 263$ \\
Netflix & $17\, 770$ & $480\, 189$ & $100\, 198\, 805$\\
Synthetic & $5\,000$ & $500\,000$ & $15\,149\,850$\\
\bottomrule
\end{tabular}
}
\end{table}
\subsection{Matrix Completion}
\textbf{Baseline techniques:} 
Our first- and second-order matrix completion algorithms are denoted by RSLM-cg and RSLM-tr, respectively. We compare against state-of-the-art fixed-rank and nuclear norm minimization based matrix completion solvers: 
\begin{itemize}
  \item APGL: An accelerated proximal gradient approach for nuclear norm regularization with square loss function~\citep{toh10a}.
  \item Active ALT: State-of-the-art first-order  nuclear norm solver based on active subspace selection~\citep{Hsieh14}. 
  \item MMBS: A second-order fixed rank nuclear norm minimization algorithm~\citep{mishra13c}. It employs an efficient factorization of the matrix $\bW$ which renders the trace norm regularizer differentiable in the primal formulation. 
  \item R3MC: A non-linear conjugate gradient based approach for fixed rank matrix completion~\citep{mishra14c}. It employs a Riemannian  preconditioning technique, customized for the square loss function. 
  \item RTRMC: It models fixed rank matrix completion problems with square loss on the Grassmann manifold and solves it via a second order preconditioned Riemannian trust-region method~\citep{boumal11a,boumal15a}. 
  \item LMaFit: A nonlinear successive over-relaxation based approach for low rank matrix completion based on alternate least squares~\citep{wen12a}. 
  \item PRP: a recent proximal Riemannian pursuit algorithm~\cite{tan16a}. 
\end{itemize}


\textbf{Data sets and experimental setup:} 
We compare the performance of the above algorithms on a synthetic data set and three movie recommendation data sets: Netflix \citep{recht13a}, MovieLens10m (ML10m), and MovieLens20m (ML20m) \citep{harper15a}. The data set statistics is provided in Table \ref{table:datasetnnmc}. The regularization parameters for respective algorithms are cross-validated in the set $\{1e-5,1e-4,\ldots,1e5\}$ to obtain their best generalization performance. The optimization strategies for the competing algorithm were set to those prescribed by their authors. For instance, line-search, continuation and truncation were kept on for APGL. The initialization for all the algorithms is based on the first few singular vectors of the given partially complete matrix $\bY$ \citep{boumal15a}. 
All the fixed algorithms (R3MC, LMaFit, MMBS, RTRMC, Proposed-cg-sq, Proposed-tr-sq) are provided the rank $r=10$ for real data sets and $r=5$ for synthetic data set. 
In all variable rank approaches (APGL, Active ALT, PRP),  the maximum rank parameter is set to $10$ for real data sets and $5$ for synthetic data set. 
We run all the methods on five random $80$/$20$ train/test splits and report the average root mean squared error on the test set (test RMSE).  We report the minimum test RMSE achieved after the algorithms have converged or have reached maximum number of iterations. 

\begin{figure*}\centering
{
\includegraphics[width=0.47\columnwidth]{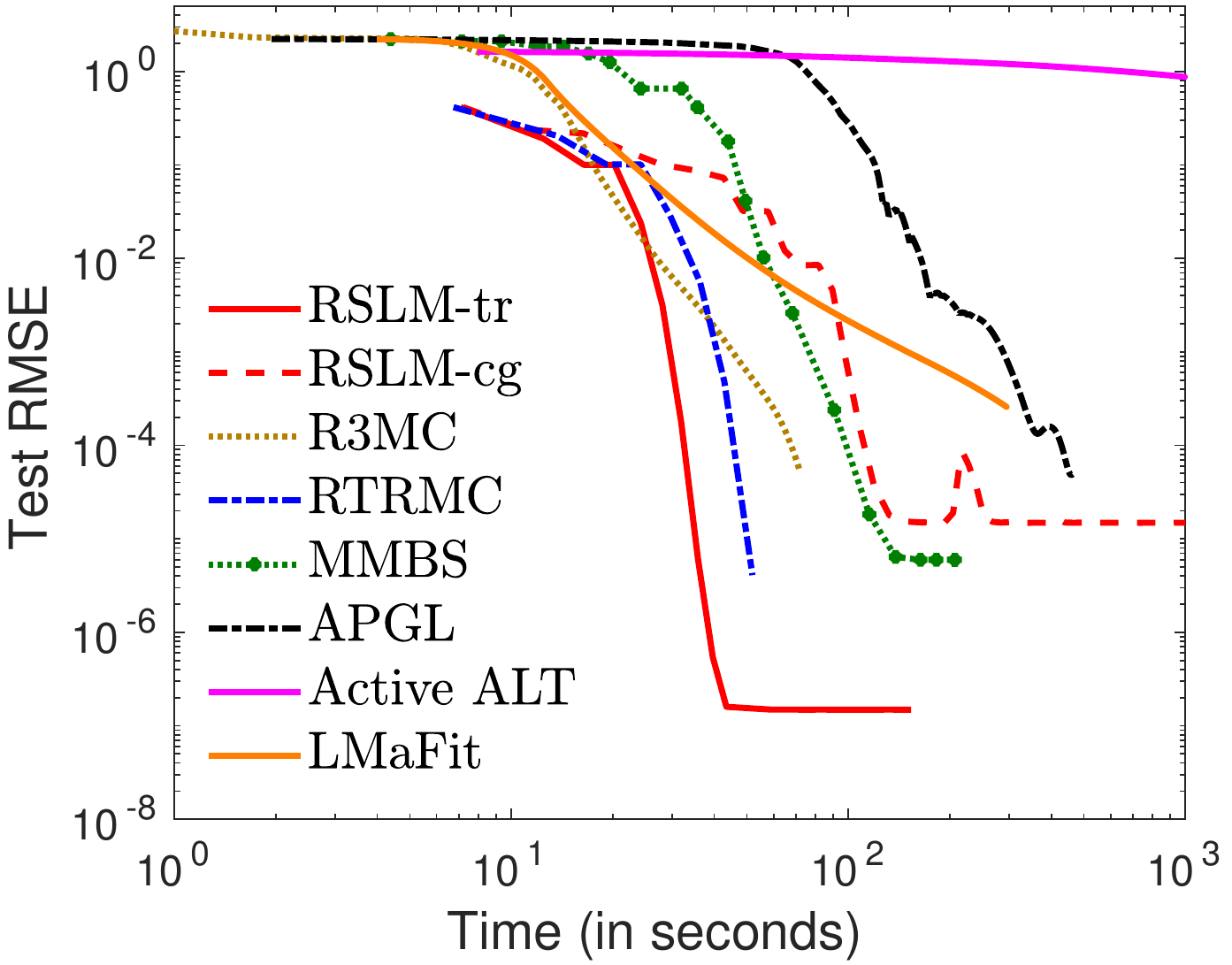}
\hspace*{\fill}
\includegraphics[width=0.47\columnwidth]{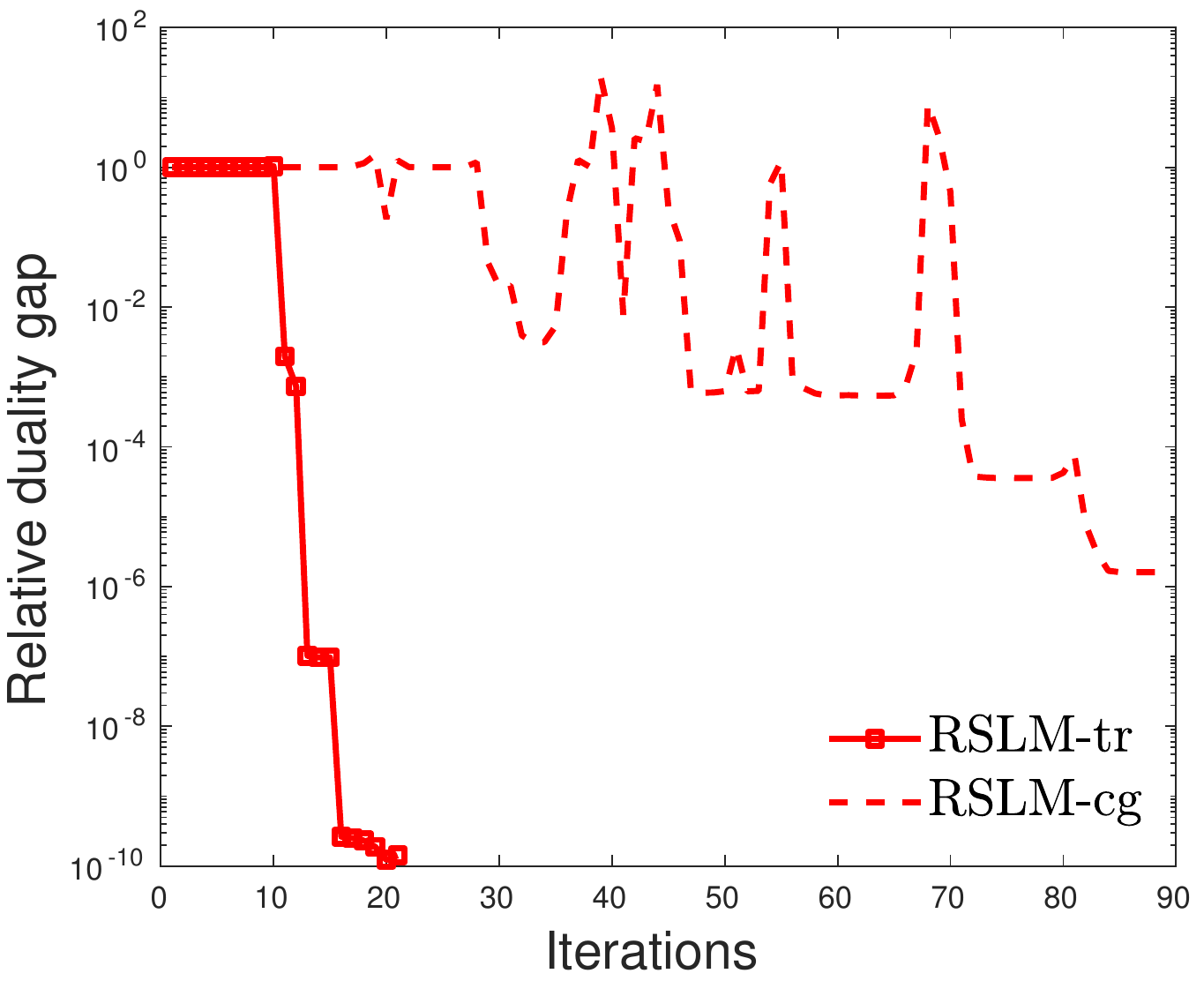}\\
}
\TwoLabels {(a)} {(b)}
\caption{%
(a) Evolution of test RMSE on the synthetic data set. Both our methods obtain very low test RMSE; (b)  Variation of  the relative duality gap per iteration for our methods on the synthetic data set. It can be observed that both our algorithms attain low relative duality gap. 
}\label{appendix:fig:multipleMatrixCompletionFigs}
\end{figure*}
\begin{figure*}[t]\centering
    \includegraphics[width=0.32\columnwidth]{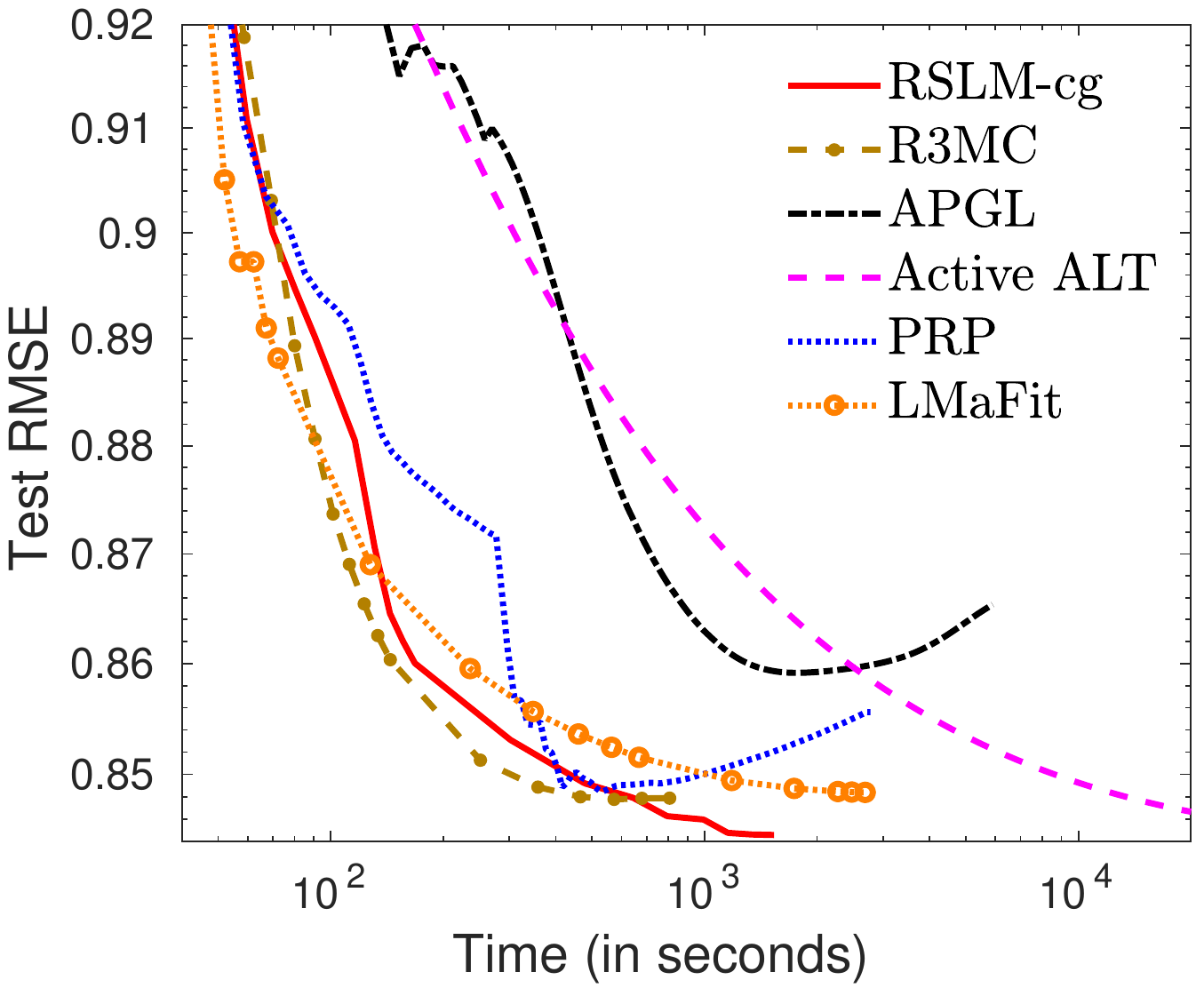}
    \includegraphics[width=0.32\columnwidth]{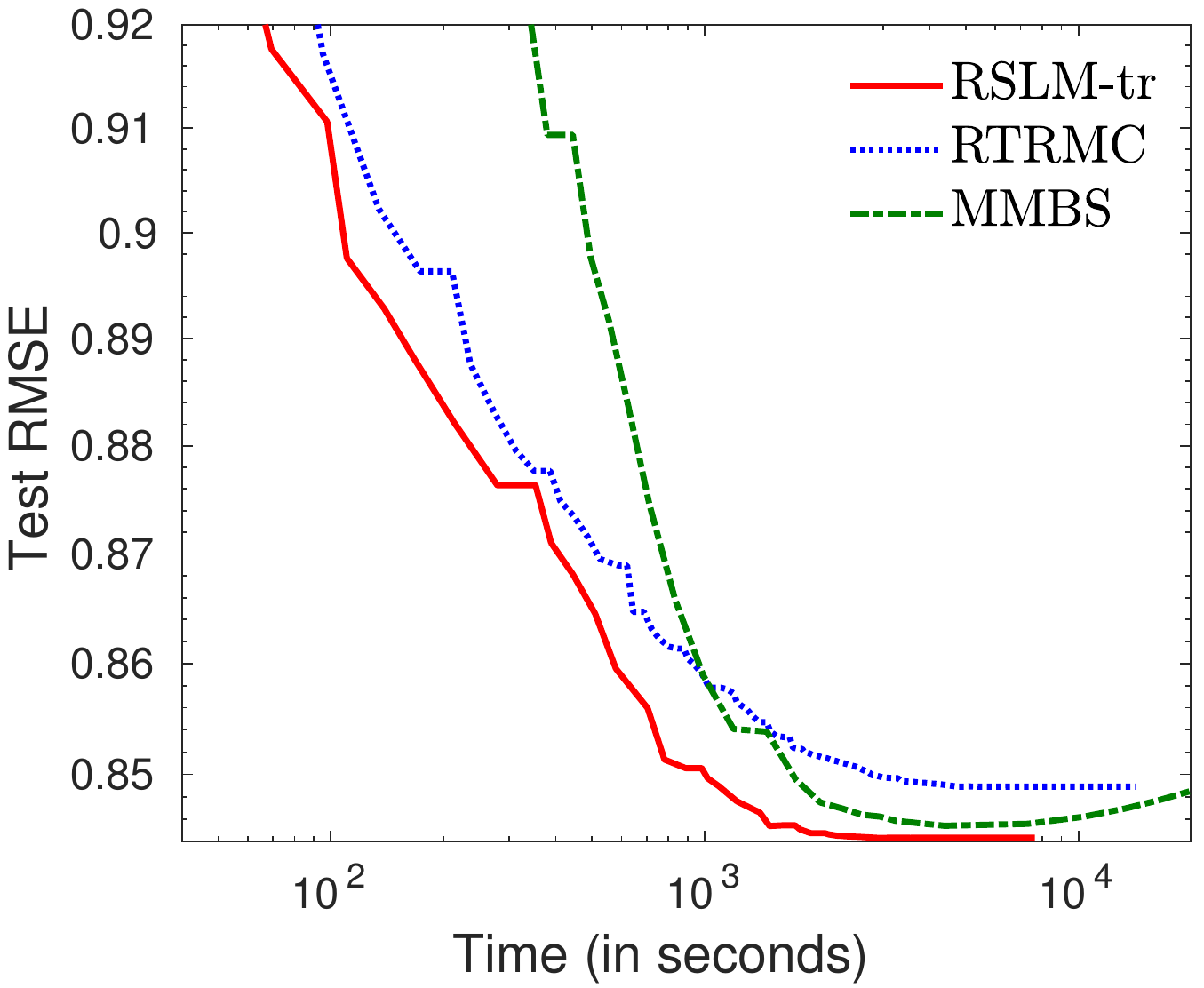}
    \includegraphics[width=0.33\columnwidth]{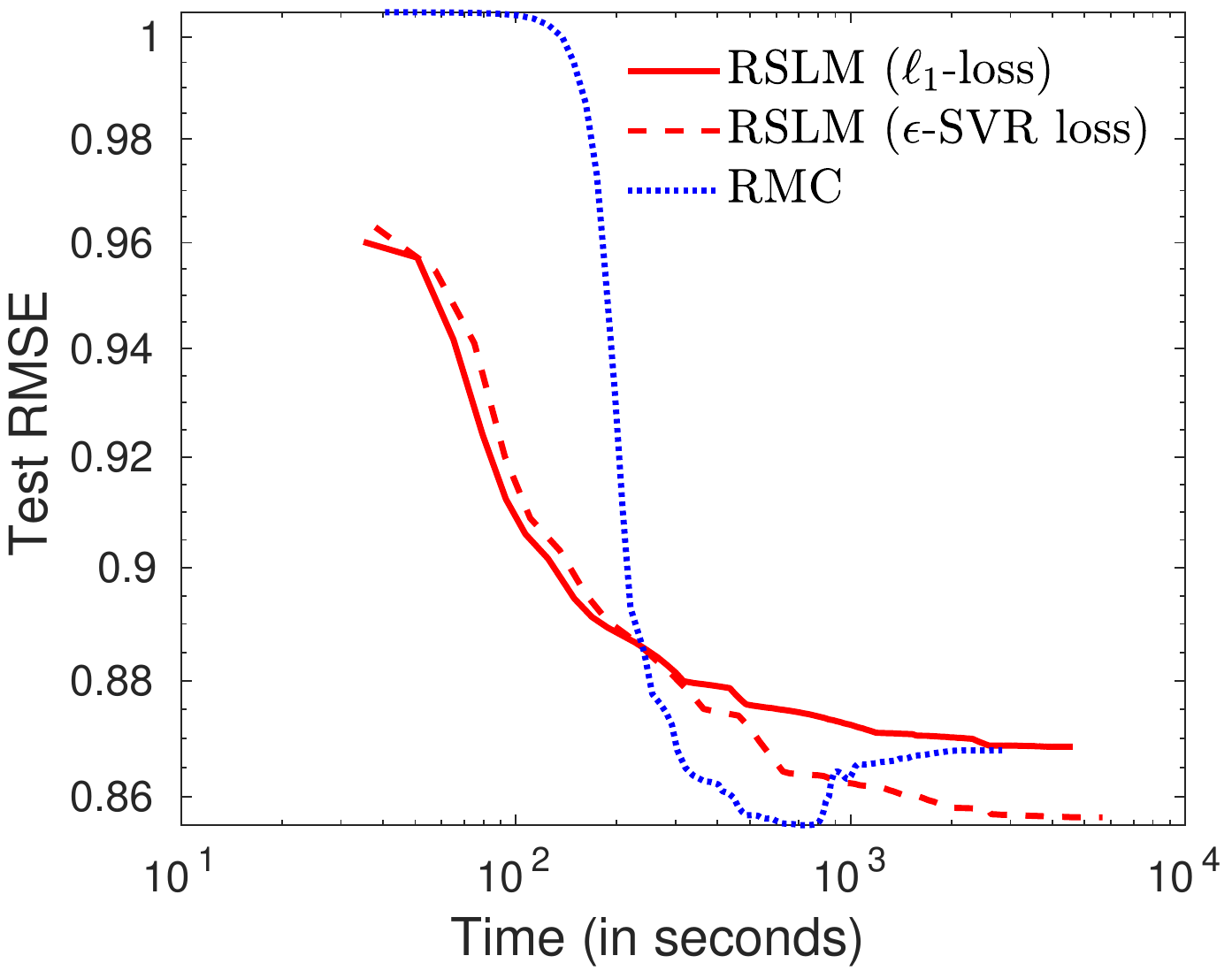}%
    \\
    {\small \ThreeLabels {(a) First-order MC methods} {(b) Second-order MC methods} {(c) Robust MC methods}}
    \caption{Evolution of test RMSE on the Netflix data set for first- and second-order matrix completion (MC) algorithms as well as robust MC algorithms. Our algorithms converge to the best generalization performance in all the experiments.}\label{fig:firstSecondOrderNetflix}
\end{figure*}


\textbf{Synthetic data set results.} We choose $d=5\, 000$, $T=500\, 000$ and $r=5$ to create a synthetic data set (with $<1\%$ observed entries), following the procedure detailed in  \citep{boumal11a,boumal15a}. The number of observed entries for both training ($|\Omega|$) and testing was $15\, 149\, 850$. 
The generalization performance of different methods is shown in Figure~\ref{appendix:fig:multipleMatrixCompletionFigs}(a). For the same run, we also plot the variation of the relative duality gap across iterations for our algorithms in figure~\ref{appendix:fig:multipleMatrixCompletionFigs}(b). 
It can be observed that RSLM-tr approach the global optima for the nuclear norm regularized problem \ref{eqn:genericPrimal0} in very few iterations and obtain test RMSE $\approx2.46\times 10^{-7}$.  Our first order algorithm, RSLM-cg, also achieves low  test RMSE and low relative duality gap.  It should be noted that our methods are able to exploit the condition that $d\ll T$ (rectangular matrices),  similar to RTRMC.

\textbf{Real-world data set results:} 
Figures~\ref{fig:firstSecondOrderNetflix} (a)\&(b) display the evolution of root mean squared error on the test set (test RMSE) against the training time on the Netflix data set for first- and second-order algorithms, respectively. 
RSLM-cg is among the most efficient first-order method and RSLM-tr is the best second-order method. 
Table~\ref{appendix:table:matrixCompletionResult} reports the test RMSE  obtained by all the algorithms on the three data sets.  
We observe that both our algorithms obtain the best generalization performance. 

\begin{table}
\centering
\caption{Generalization performance of various algorithms on the matrix completion problem. The table reports mean test RMSE along with the standard deviation over five random train-test split. The proposed first-order (RSLM-cg) and second-order (RSLM-tr) algorithms achieve the lowest test RMSE.}\label{appendix:table:matrixCompletionResult}
\begin{tabular}{llll}
\toprule
  & \multicolumn{1}{c}{Netflix} & \multicolumn{1}{c}{MovieLens10m} & \multicolumn{1}{c}{MovieLens20m}\\
\midrule 
RSLM-tr & $\mathbf{0.8443\pm0.0001}$ & $\mathbf{0.8026\pm0.0005}$ & $\mathbf{0.7962\pm0.0003}$\\
RSLM-cg & $\mathbf{0.8449\pm0.0003}$ & $\mathbf{0.8026\pm0.0005}$ & $\mathbf{0.7963\pm0.0003}$ \\
R3MC & $0.8478\pm0.0001$ & $0.8070\pm0.0004$ & $0.7982\pm0.0003$ \\
RTRMC & $0.8489\pm0.0001$ & $0.8161\pm0.0004$ & $0.8044\pm0.0005$\\
APGL & $0.8587\pm0.0005$ & $0.8283\pm0.0009$ & $0.8160\pm0.0013$\\ 
Active ALT & $0.8463\pm0.0005$ & $0.8116\pm0.0012$ & $0.8033\pm0.0008$\\ 
MMBS & $0.8454\pm0.0002$ & $0.8226\pm0.0015$ & $0.8053\pm0.0008$\\ 
LMaFit & $0.8484\pm0.0001$ & $0.8082\pm0.0005$ & $0.7996\pm0.0003$\\
PRP & $0.8488\pm0.0007$ & $0.8068\pm0.0006$ & $0.7987\pm0.0008$ \\
\bottomrule
\end{tabular}
\end{table}


\subsection{Robust matrix completion}
We develop RSLM with two different robust loss functions: $\ell_1$-loss and $\epsilon$-SVR loss. The non-smooth nature of $\ell_1$-loss and $\epsilon$-SVR loss makes them challenging to optimize in large-scale low-rank setting. 

\textbf{Baseline techniques:}  
For evaluation, we compare RSLM against RMC~\citep{Cambier16}, a large-scale state-of-the-art Riemannian optimization algorithm for robust matrix completion. RMC employs the pseudo-Huber loss function  (\ref{eqn:rmcOrg}). While solving (\ref{eqn:rmcOrg}), RMC decreases the value of $\delta$ at regular intervals. Hence, at later iterations, when the value of $\delta$ is low, the pseudo-Huber loss approximates the $\ell_1$-loss. 

\textbf{Data sets and experimental setup:} We compare the performance of all three algorithms on the Netflix data set. We follow the same experimental setup as described for the case of matrix completion. Rank $r=10$ is fixed for all the algorithms. 

\textbf{Results:}
Figure~\ref{fig:firstSecondOrderNetflix}(c) shows the results on the Netflix data set. We observe that RSLM scales effortlessly on the Netflix data set even with non-smooth loss functions and obtains the best generalization performance with the $\epsilon$-SVR loss. The test RMSE obtained at convergence are: $\mathbf{0.857}$ (RSLM with $\epsilon$-SVR loss), $0.869$ (RSLM with $\ell_1$-loss), and $0.868$ (RMC).

\subsection{Non-negative matrix completion}
 
\textbf{Baseline techniques:}  We include the following methods in our comparisons: BMC~\citep{fang17a} and {BMA}~\citep{kannan12a}. BMC is a recently proposed ADMM based algorithm with carefully designed update rules to ensure an efficient computational and space complexity. BMA is based on co-ordinate descent algorithm. 

\textbf{Data sets and experimental setup:} We compare the performance of the above algorithms on three real world data sets (Table \ref{table:datasetnnmc}): MovieLens1m (ML1m), MovieLens10m (ML10m), and MovieLens20m (ML20m). The experimental setup is the same as described for the case of matrix completion. The performance of all the algorithms are evaluated at three ranks: $r=5,10,20$. 

\textbf{Results:} 
We observe in Table~\ref{table:nonnegative} that RSLM outperforms both BMC and BMA. The improvement obtained by RSLM over BMC is more pronounced with larger data sets. 
BMA is not able to run on MovieLens20m due to high memory and time complexity.

Figures~\ref{fig:objvstime}(a)-(e) compare the convergence behavior of RSLM and BMC for different values of parameters $C$ and $r$ on all three data sets. Both algorithms  aim to minimize the same primal objective (\ref{eqn:genericPrimal0}) for a given rank $r$. Though the proposed approach solves the proposed fixed-rank dual formulation (\ref{eqn:dual21}), we compute the corresponding primal objective value of every iterate for the plots. We observe that our algorithm RSLM is significantly faster than BMC in converging to a lower objective value. 

Figures~\ref{fig:rmsevstime}(a)-(e) plot the evolution of test RMSE against training time for algorithms RSLM and BMC on different data sets with different ranks (and the corresponding best $C$ parameter). For a given data set and rank, both RSLM and BMC chose the same value of $C$ via cross-validation. We observe that the RSLM outperforms BMC in converging to a lower test RMSE at a much faster rate. 

\begin{table*}\centering
{\small
\centering
\caption{\label{table:nonnegative}Mean test RMSE on non-negative matrix completion problems. The proposed algorithm RSLM obtains the best generalization performance. 
}
\begin{tabular}{lllll}
\toprule
\multicolumn{1}{c}{Data set} & \multicolumn{1}{c}{$r$}  & \multicolumn{1}{c}{RSLM} & \multicolumn{1}{c}{BMC} & \multicolumn{1}{c}{BMA} \\
\midrule 
\multirow{3}{*}{MovieLens1m} & $5$  & $\mathbf{0.8651\pm0.0014}$ & $0.8672\pm0.0014$ & $0.9476\pm0.0112$ \\
 											  & $10$ & $0.8574\pm0.0015$ & $\mathbf{0.8529\pm0.0014}$ & $0.9505\pm0.0025$ \\
 											  & $20$ & $\mathbf{0.8678\pm0.0016}$ & $0.8691\pm0.0012$ & $0.9520\pm0.0033$ \\
\midrule
\multirow{3}{*}{MovieLens10m} & $5$  & $\mathbf{0.8135\pm0.0004}$ & $0.8237\pm0.0004$ & $0.8989\pm0.0088$ \\
 											    & $10$ & $\mathbf{0.8031\pm0.0004}$ & $0.8038\pm0.0007$ & $0.8832\pm0.0055$ \\
 											    & $20$ & $\mathbf{0.8148\pm0.0008}$ & $0.8842\pm0.0061$ & $0.8904\pm0.0041$ \\
\midrule
\multirow{3}{*}{MovieLens20m} & $5$  & $\mathbf{0.8142\pm0.0005}$ & $0.8454\pm0.0021$ & \multicolumn{1}{c}{--} \\
 											    & $10$ & $\mathbf{0.8014\pm0.0003}$ & $0.8477\pm0.0004$ & \multicolumn{1}{c}{--} \\
 											    & $20$ & $\mathbf{0.8065\pm0.0015}$ & $0.9130\pm0.0013$ & \multicolumn{1}{c}{--} \\
\bottomrule
\end{tabular}
}
\end{table*}
\begin{figure*}[t]\centering
{
\includegraphics[width=0.2\columnwidth]{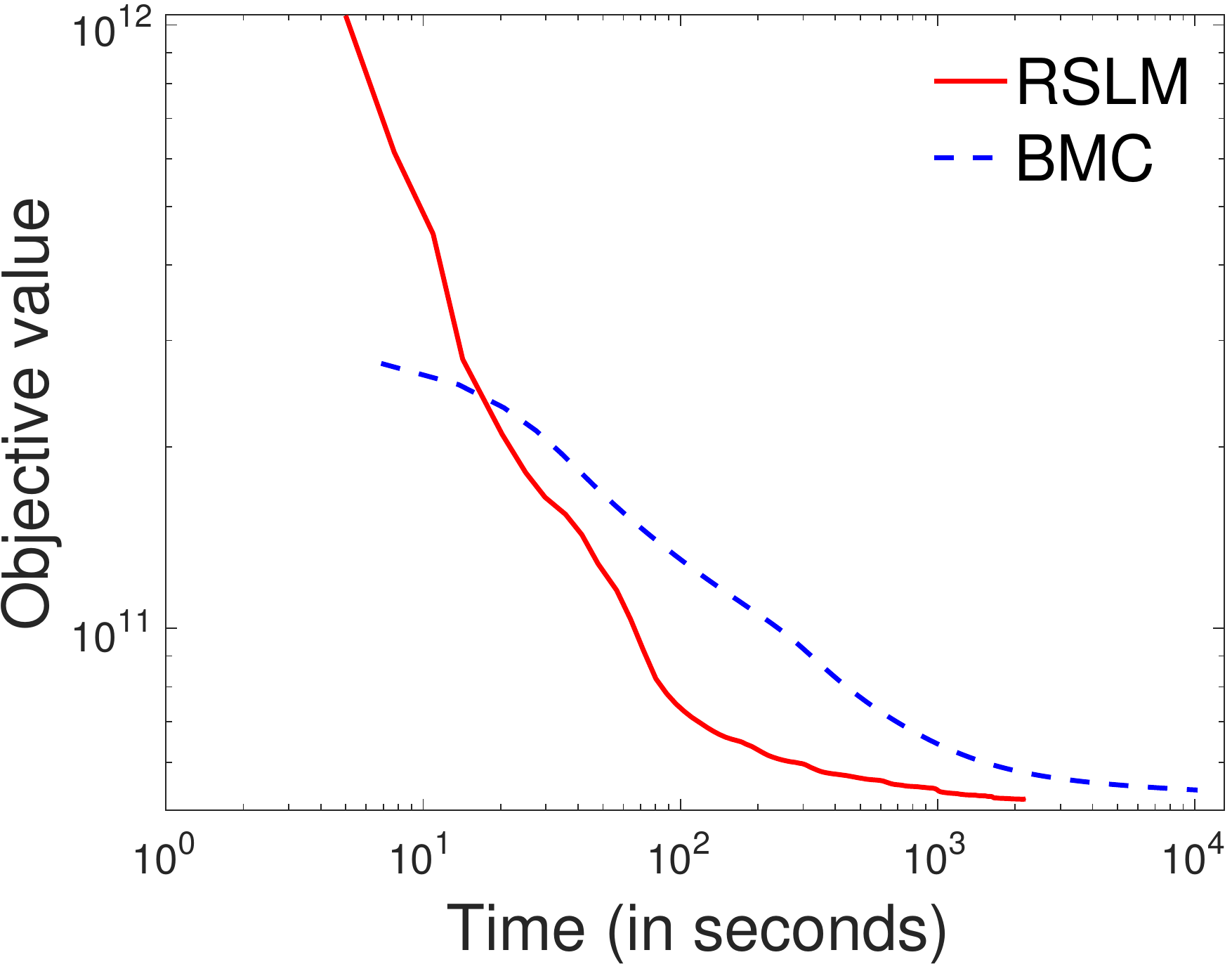}%
\hspace*{\fill}
\includegraphics[width=0.2\columnwidth]{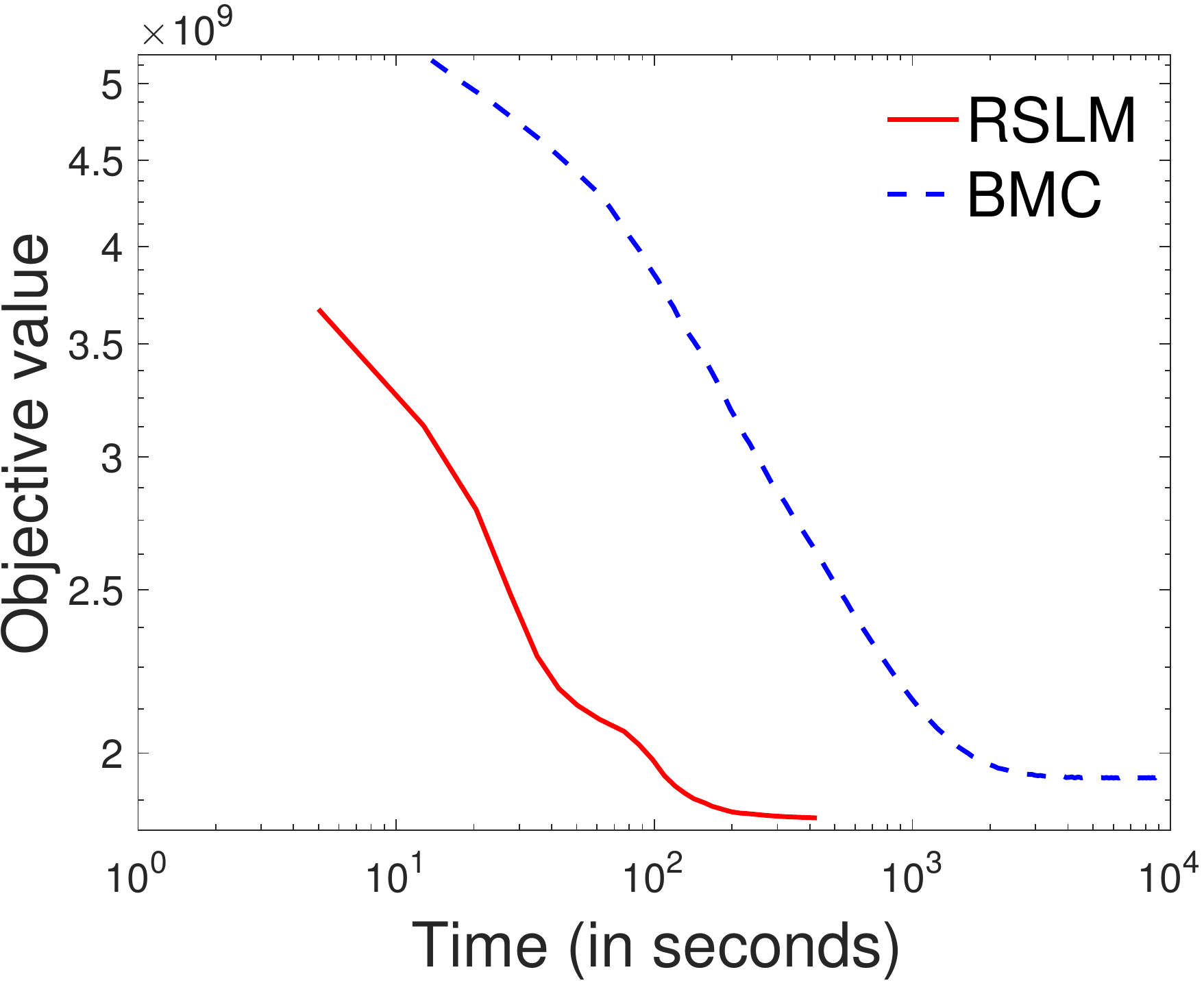}%
\hspace*{\fill}
\includegraphics[width=0.2\columnwidth]{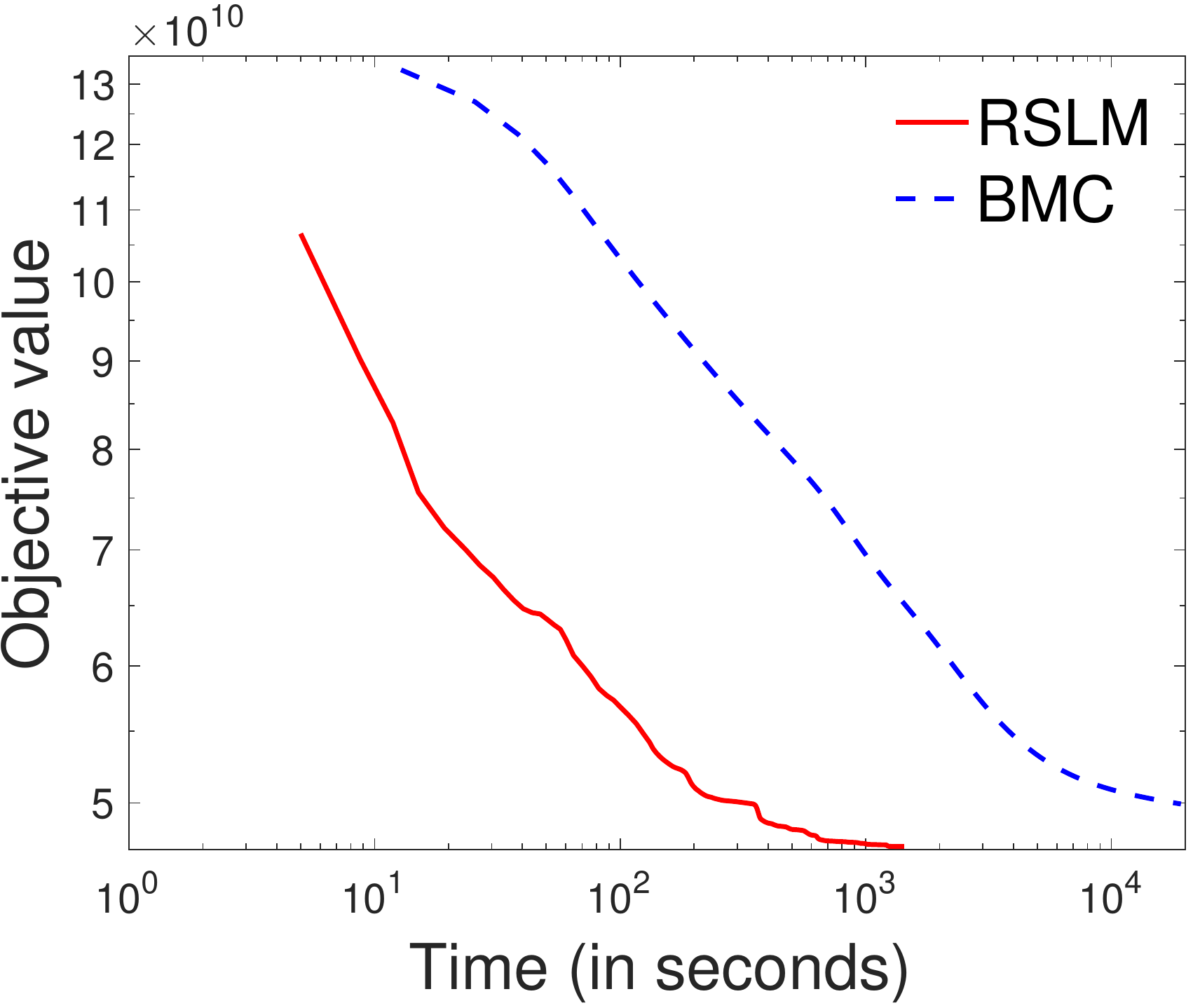}%
\hspace*{\fill}
\includegraphics[width=0.2\columnwidth]{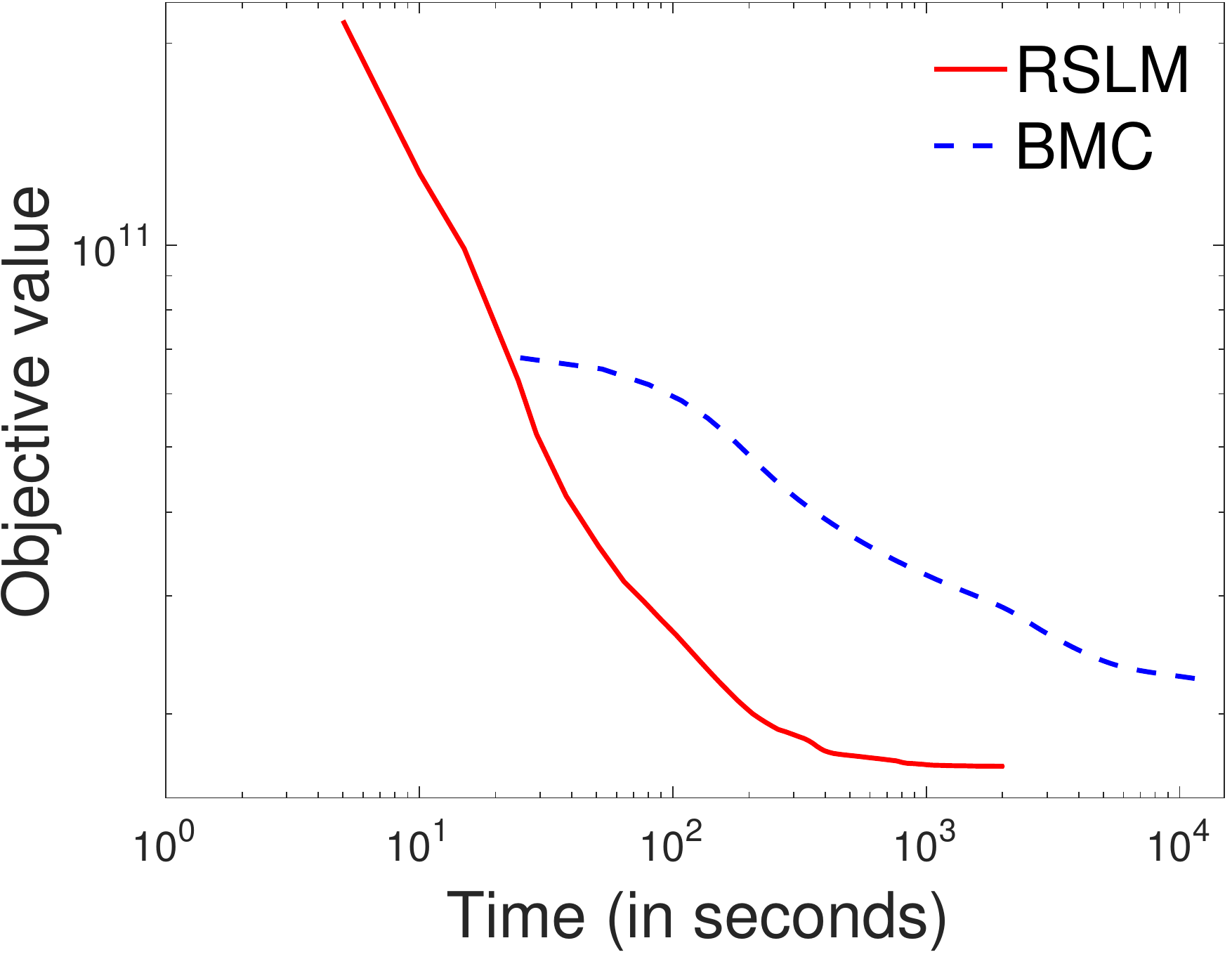}%
\hspace*{\fill}
\includegraphics[width=0.2\columnwidth]{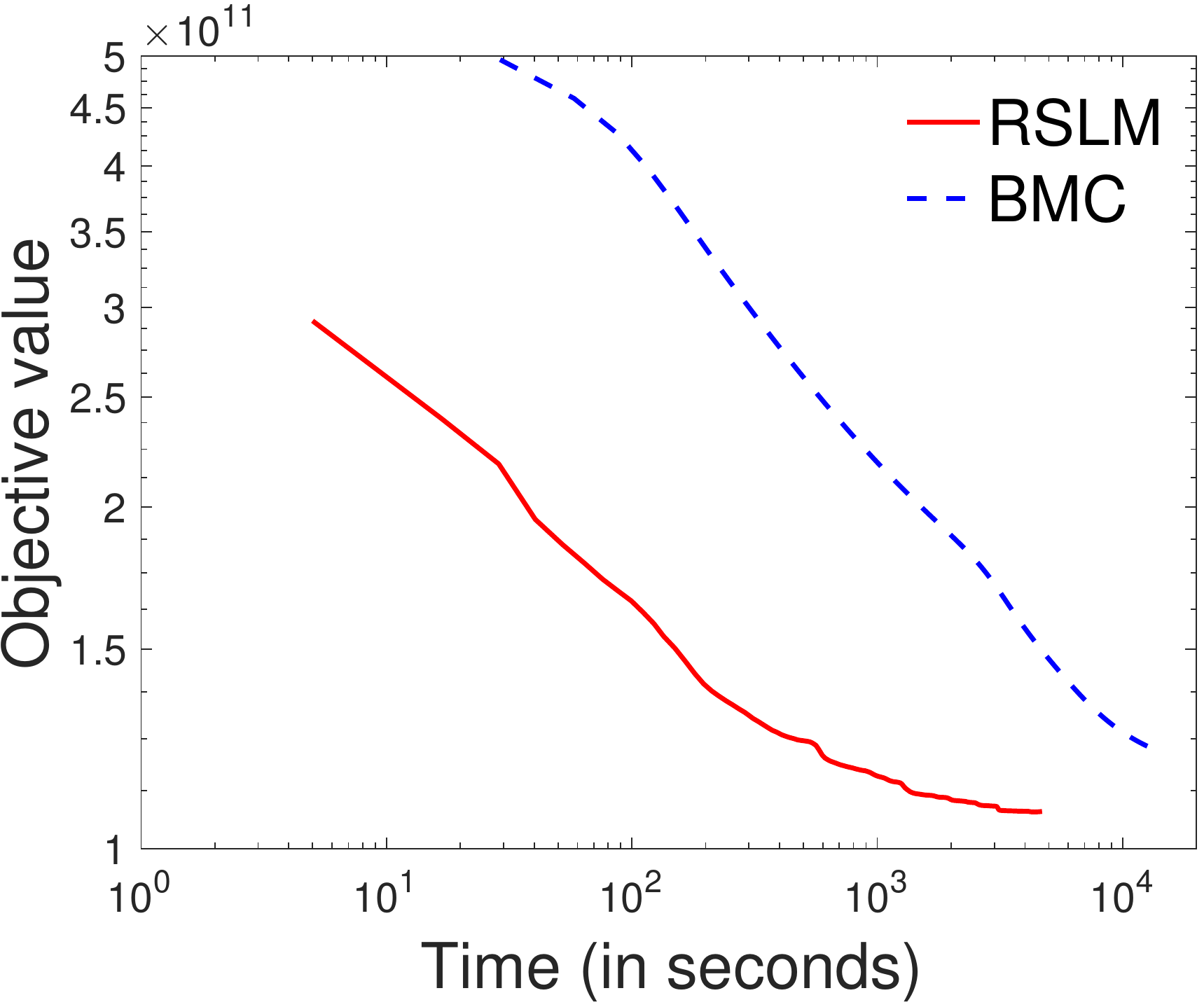}%
}
{\FiveLabels {(a)} {(b)} {(c)} {(d)} {(e)}}
\caption{Convergence behavior of RSLM and BMC for different values of regularization parameter and rank on non-negative matrix completion problems:  (a) ML10m, $(r,C):(5,1e4)$, (b) ML10m, $(r,C):(10,1e2)$, (c) ML10m, $(r,C):(10,1e4)$, (d) ML20m, $(r,C):(5,1e3)$, and (e) ML20m, $(r,C):(5,1e4)$. 
The proposed algorithm, RSLM, achieves better objective value in lesser training time than BMC. Both the axes are in $\log_{10}$ scale.}\label{fig:objvstime}
\end{figure*}
\begin{figure*}[t]\centering
{
\includegraphics[width=0.2\columnwidth]{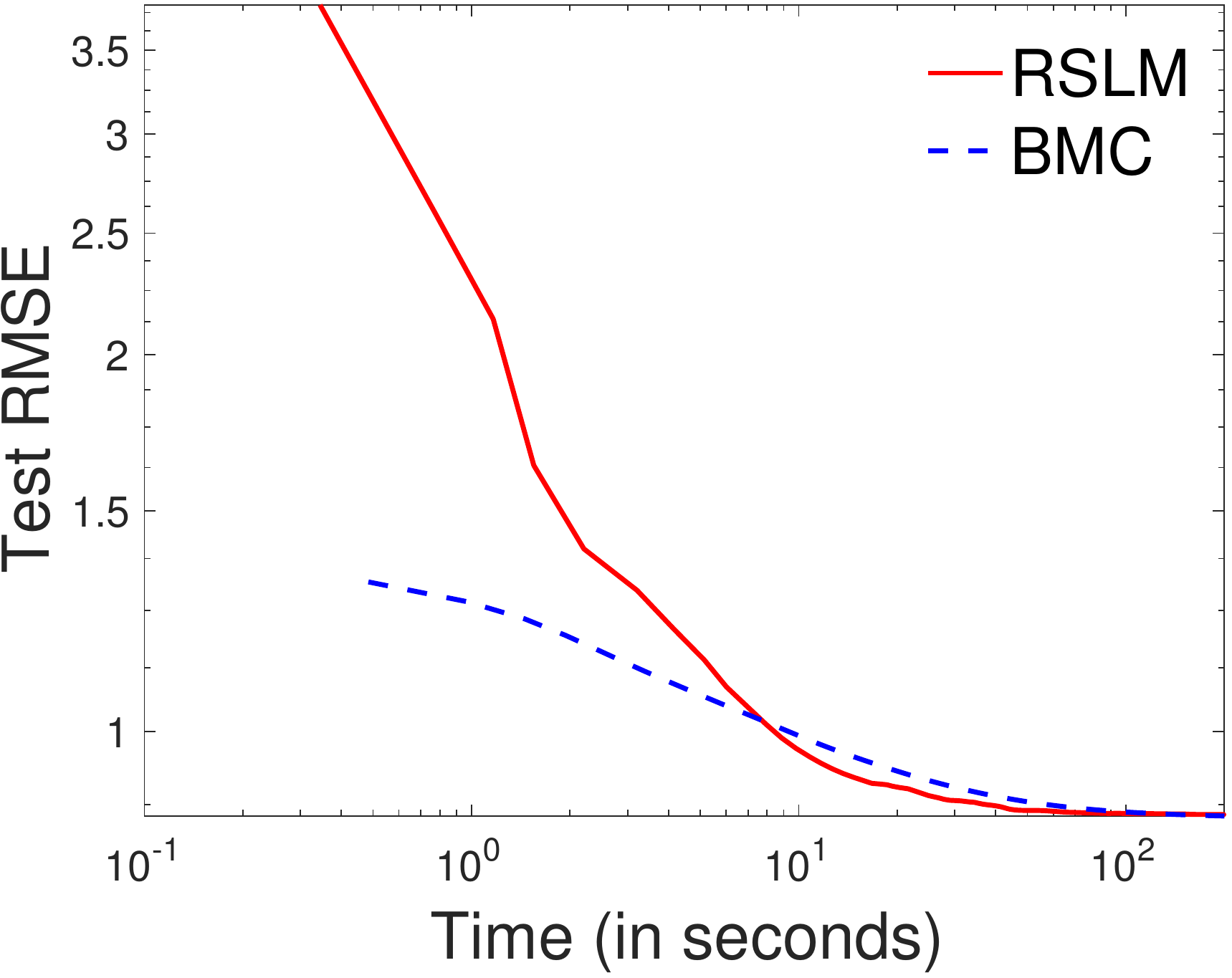}%
\hspace*{\fill}
\includegraphics[width=0.2\columnwidth]{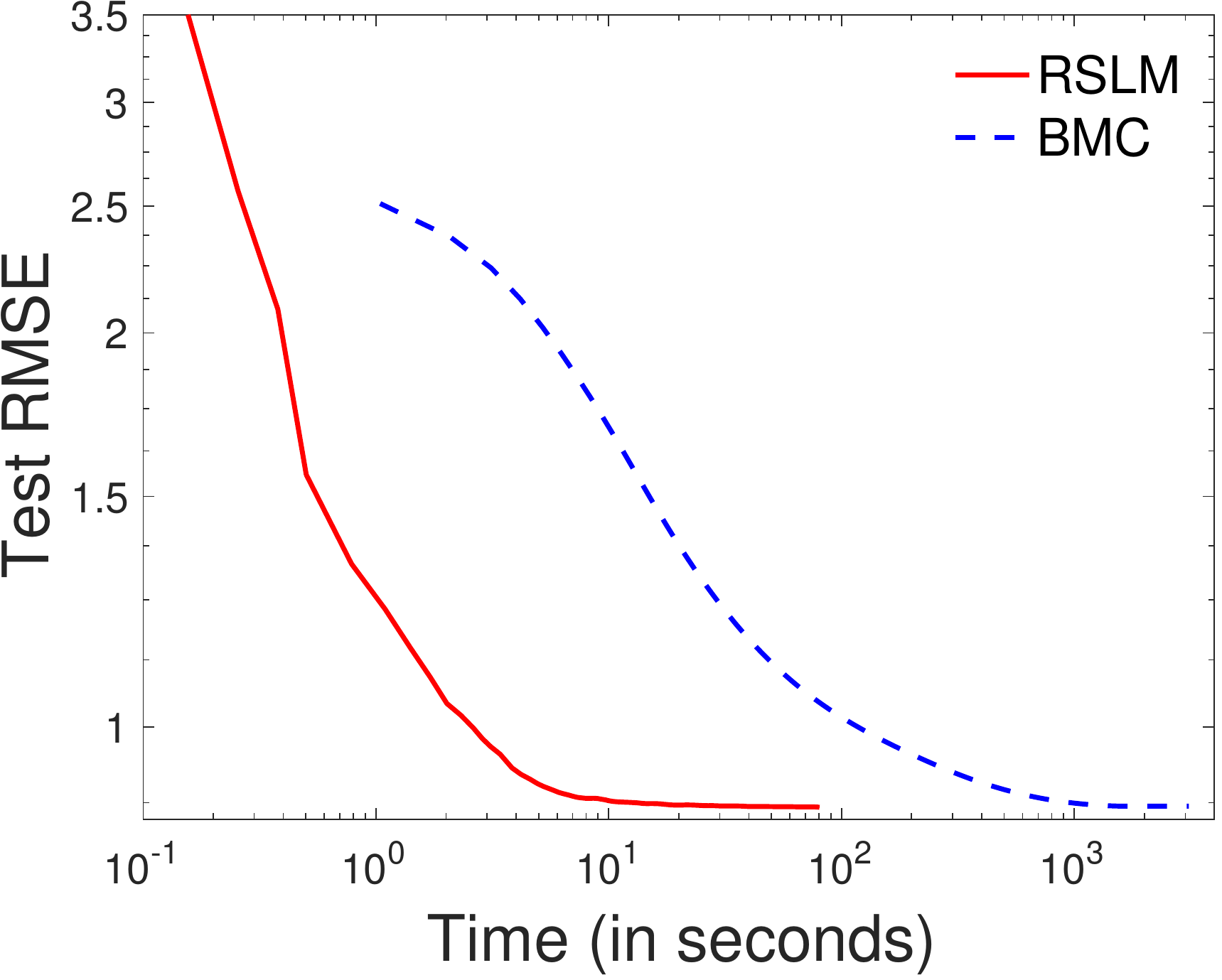}%
\hspace*{\fill}
\includegraphics[width=0.2\columnwidth]{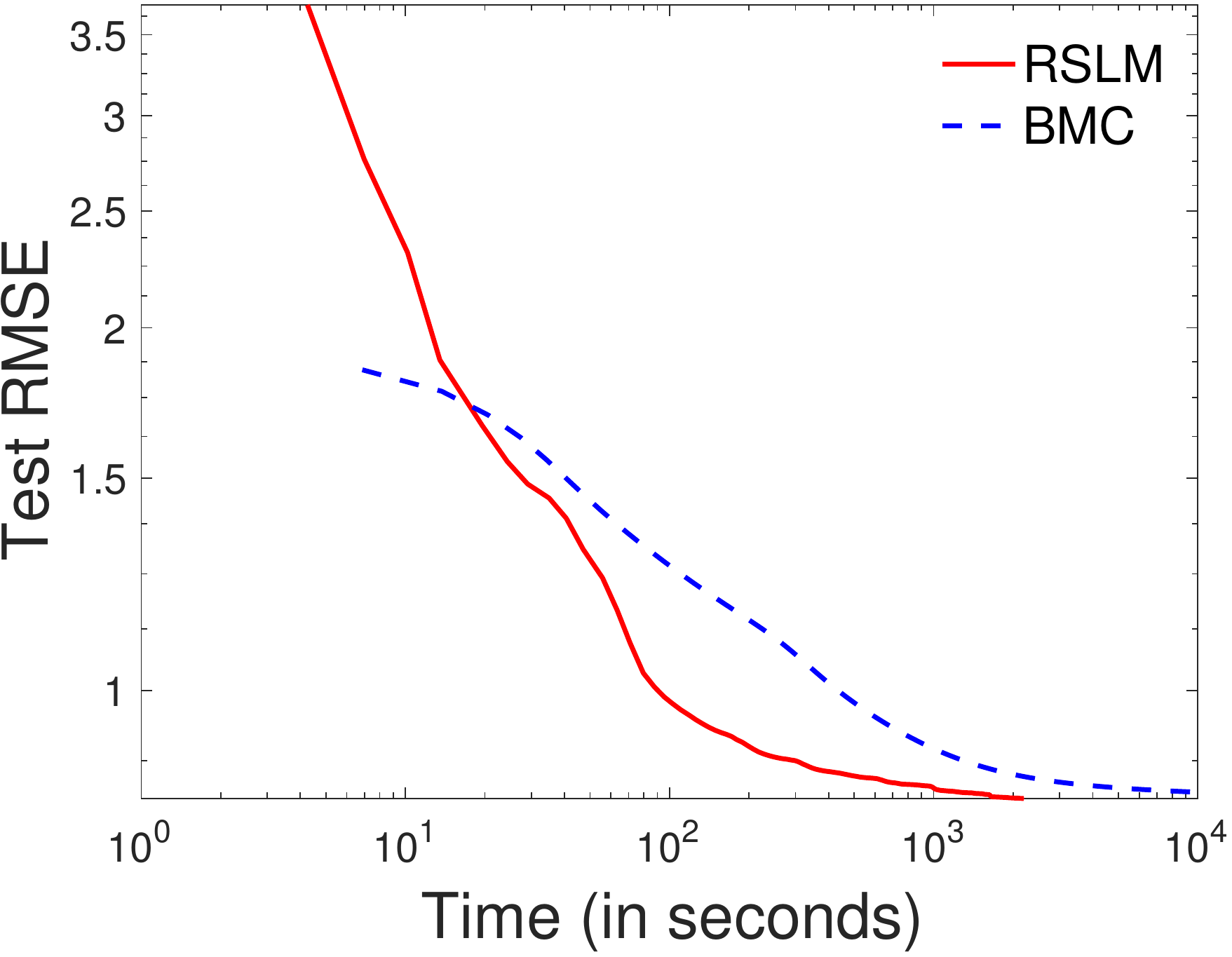}%
\hspace*{\fill}
\includegraphics[width=0.2\columnwidth]{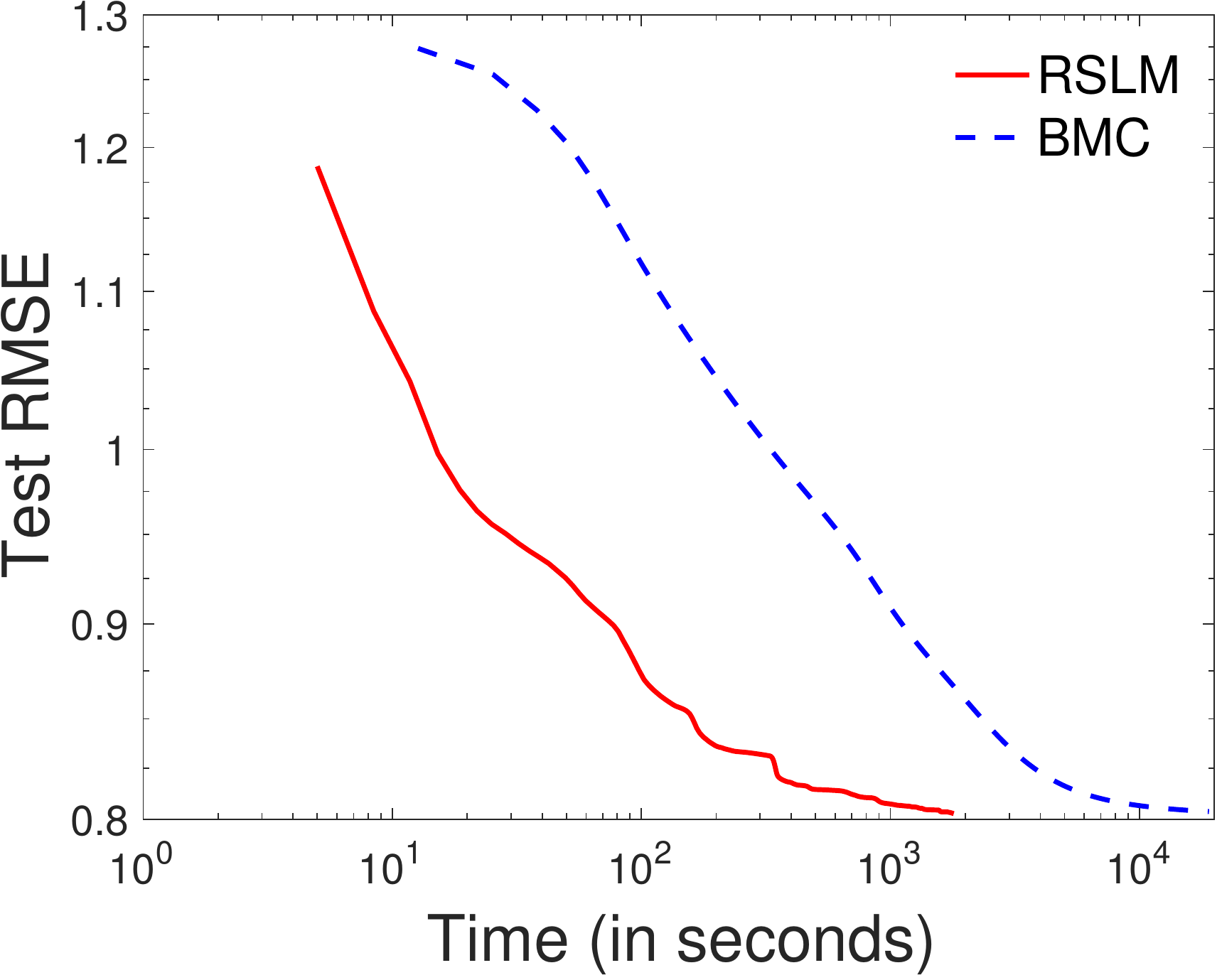}%
\hspace*{\fill}
\includegraphics[width=0.2\columnwidth]{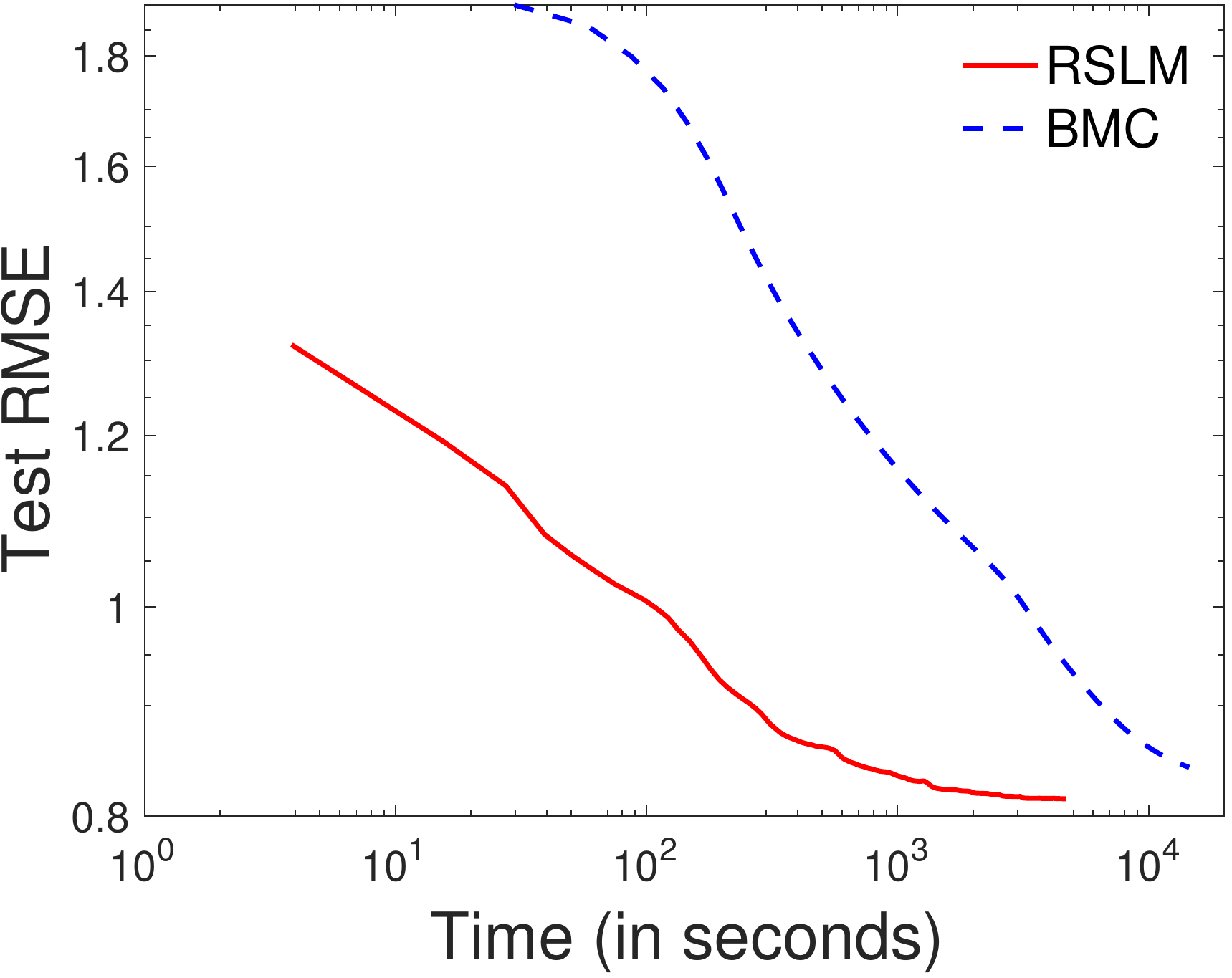}%
}
{\FiveLabels {(a)} {(b)} {(c)} {(d)} {(e)}}
\caption{Evolution of test RMSE of RSLM and BMC for different ranks (with tuned regularization parameter) on non-negative matrix completion problems: (a) ML1m, $(r,C):(10,1e3)$, (b) ML1m, $(r,C):(20,1e3)$, (c) ML10m, $(r,C):(5,1e4)$, (d) ML10m, $(r,C):(10,1e4)$, and (e) ML20m, $(r,C):(5,1e4)$. 
The proposed algorithm, RSLM, achieves better generalization performance than BMC and in lesser training time. Both the axes are in $\log_{10}$ scale. }\label{fig:rmsevstime}
\end{figure*}

\subsection{Hankel Matrix Learning}

\textbf{Baseline techniques:} We compare our algorithm RSLM with three methods (discussed in Section~\ref{sec:related}): GCG~\citep{yu14b}, SLRA~\citep{markovsky14a,Markovsky14}, and DADM~\citep{Fazel13}. Since GCG and DADM employ the nuclear norm regularizer, we tune the regularization parameter to vary the rank of their solution. 

\textbf{Data sets and experimental setup:}
Given a vector $y\in\R^{d+T-1}$, we obtain a $d\times T$ Hankel matrix as discussed in Section~\ref{sec:hmlformulation}. The true parameter $y$ is generated as the impulse response  (skipping the first sample) of a discrete-time random linear time-invariant system of order $r_0$ \citep{Markovsky11a,Fazel13}. The noisy estimate of these parameters ($\bar{y}$) are generated as $\bar{y}=y+\sigma\epsilon$, where $\sigma\epsilon$ is the measurement noise. We set $\sigma=0.05$ and generate $\epsilon$ from the standard Gaussian distribution $N(0,1)$.  It should be noted that $\bar{y}$ is the {\it train} data and $y$ is the {\it true} data. 

We generate three different data set $D_1,D_2,D_3$ of varying size and order. $D_1$ has $(r_0,d,T)=(5,100,100)$, where $r_0$ is the true order of the underlying system. 
The other two data sets, $D_2$ and $D_3$, has the configurations $(10,100,1000)$ and $(20,1000,10000)$, respectively. 
We evaluate the algorithms on all the three data sets and report their RMSE with respect to the true data~\citep{liu09a} at different ranks ($r=5,10,20$) in Table~\ref{table:hankel}.

\textbf{Results:} We observe from Table~\ref{table:hankel} that the proposed algorithm RSLM obtains the best result in all the three data sets and across different ranks. In addition, RSLM also usually obtains the lowest true RMSE for a data set at the rank $r$ equal to $r_0$ of the data set. This implies that RSLM is able to identify the minimal order of the systems corresponding to the data sets.

\begin{table}\centering
{\small
\centering
\caption{\label{table:hankel} RMSE on problems that involve learning a low-rank Hankel matrix. The proposed algorithm, RSLM, obtains the best generalization performance. 
}
\begin{tabular}{llllll}
\toprule
\multicolumn{1}{c}{Data set} & \multicolumn{1}{c}{$r$}  & \multicolumn{1}{c}{RSLM} & \multicolumn{1}{c}{SLRA} & \multicolumn{1}{c}{DADM} & \multicolumn{1}{c}{GCG}\\
\midrule 
\multirow{3}{*}{\shortstack{$D_1$:\\ $d=100,$\\$T=100$}} 				  & $5$  & $\mathbf{0.0156}$ & $0.0159$ & $0.0628$ & $0.1068$\\
 											  													  & $10$ & $\mathbf{0.0171}$ & $0.0190$ & $0.0291$ & $0.0846$\\
 											  													  & $20$ & $\mathbf{0.0177}$ & $0.0295$ & $0.0196$ & $0.0667$\\
\midrule
\multirow{3}{*}{\shortstack{$D_2$:\\$d=100,$\\$T=1000$}}				    & $5$  & $\mathbf{0.0059}$ & $0.0164$ & $0.0331$ & $0.0531$\\
 											  													        & $10$ & $\mathbf{0.0060}$ & $0.0077$ & $0.0250$ & $0.0386$\\
 											  													        & $20$ & $\mathbf{0.0071}$ & $0.0108$ & $0.0159$ & $0.0345$\\
\midrule
\multirow{3}{*}{\shortstack{$D_3$:\\$d=1000,$\\$T=10000$}}				    & $5$  & $\mathbf{0.0149}$ & $\mathbf{0.0149}$ & $0.0458$ & $0.0458$\\
 											  													  & $10$ & $\mathbf{0.0043}$ & $0.0049$ & $0.0314$ & $0.0340$\\
 											  													  & $20$ & $\mathbf{0.0039}$ & $0.0053$ & $0.0288$ & $0.0330$\\
\bottomrule
\end{tabular}
}
\end{table}

\begin{figure*}\centering
{
\includegraphics[width=0.24\columnwidth]{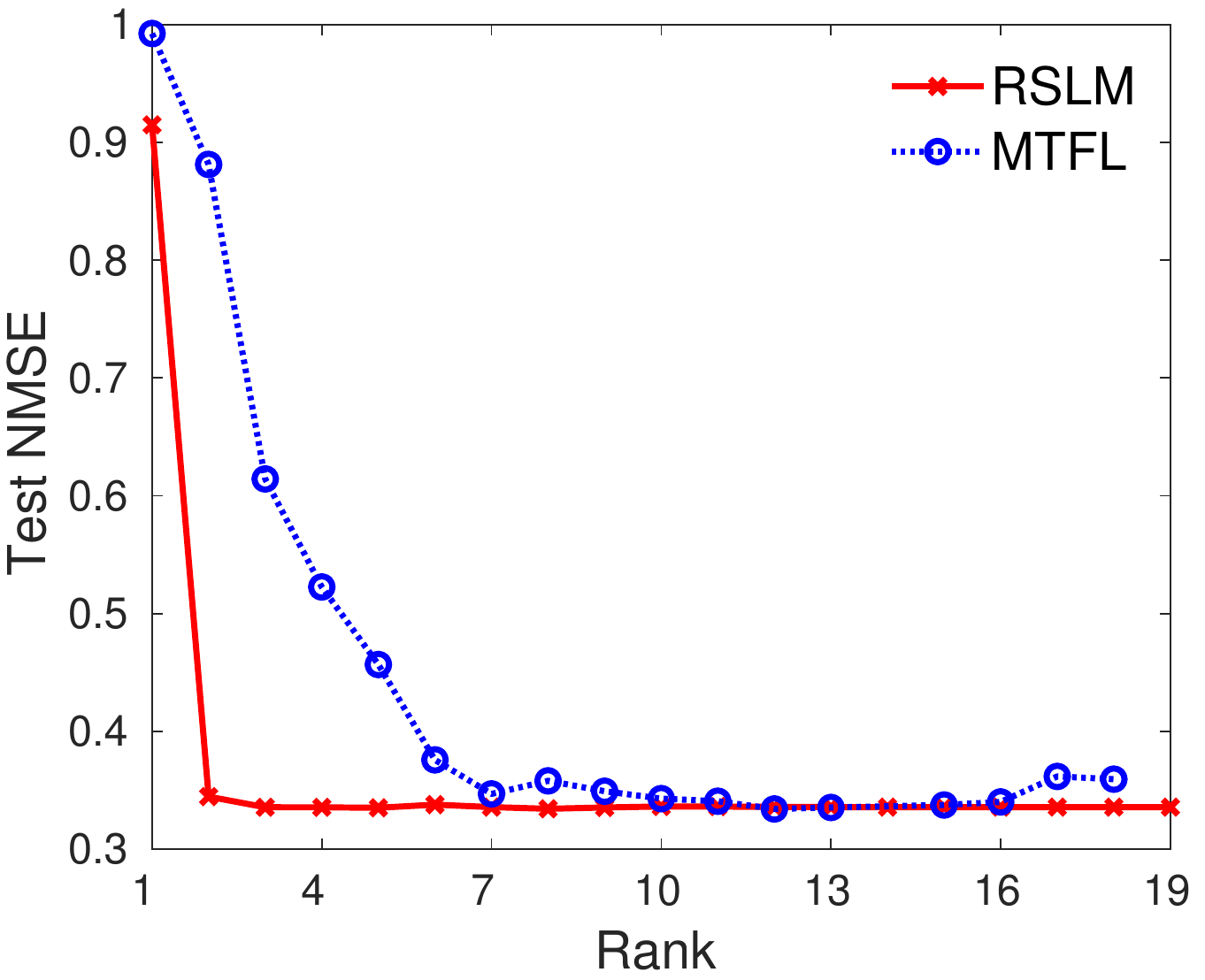}%
\hspace*{\fill}
\includegraphics[width=0.24\columnwidth]{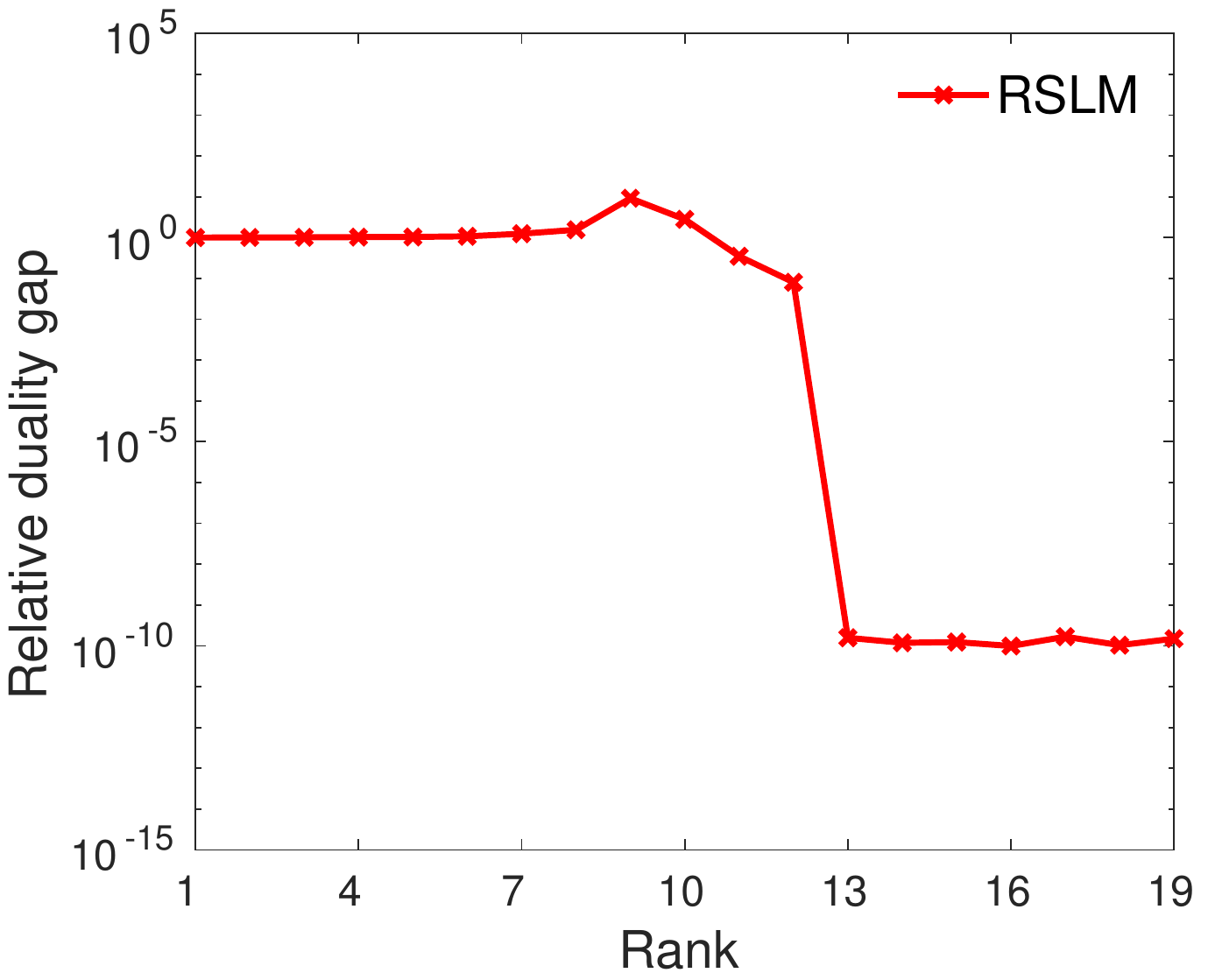}%
\hspace*{\fill}
\includegraphics[width=0.24\columnwidth]{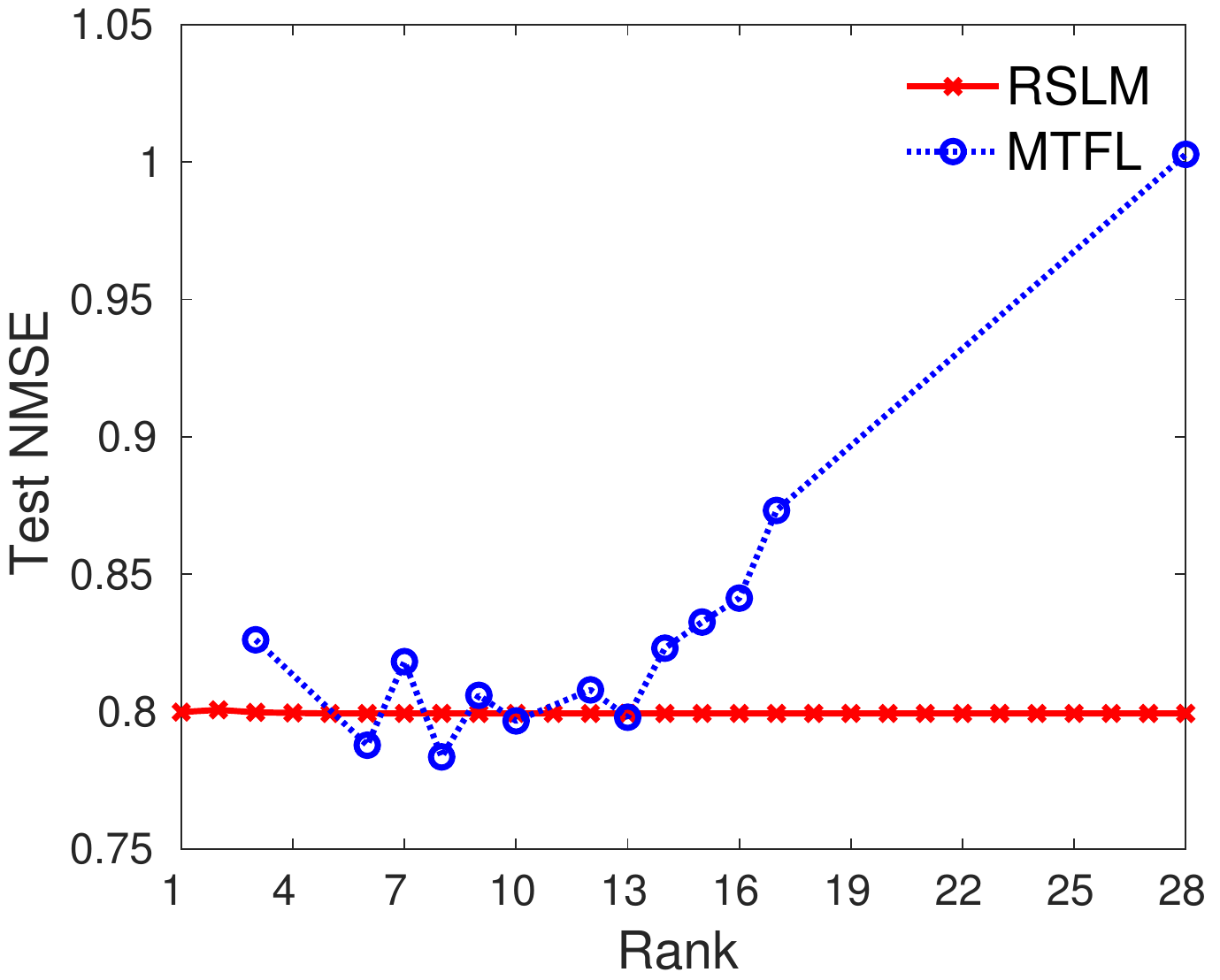}%
\hspace*{\fill}
\includegraphics[width=0.24\columnwidth]{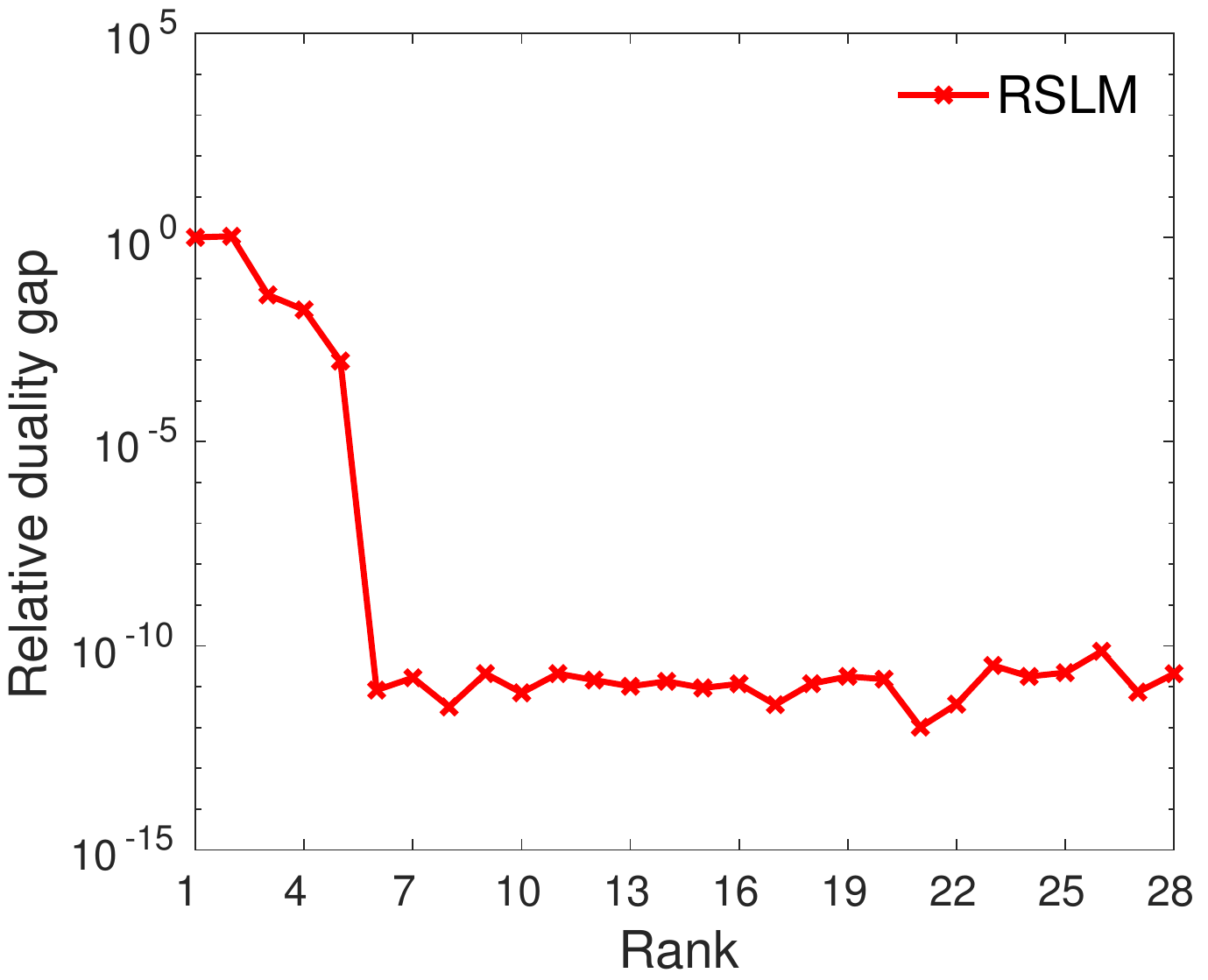}\\%
}
\FourLabels {(a)} {(b)} {(c)} {(d)}
\caption{%
(a) \& (c) Variation of normalized mean squared error (NMSE) as the rank of the optimal solution changes on Parkinsons and School data sets respectively. Our multi-task feature learning algorithm,  RSLM, obtains best generalization at much lower rank compared to state-of-the-art MTFL algorithm~\citep{Argyriou08}; (b) \& (d) The relative duality gap $(\Delta)$ corresponding to  the optimal solutions obtained by RSLM at different ranks. A small $\Delta$ implies that the solution obtained by RSLM is also the global optimal solution of the primal problem (\ref{eqn:genericPrimal0}).  
}\label{appendix:fig:multiTask}
\end{figure*}
\subsection{Multi-task feature learning}
\textbf{Experimental setup:} We compare the generalization performance of our algorithm RSLM with the convex multi-task feature learning algorithm MTFL \citep{Argyriou08}. Optimal solution for MTFL at different ranks is obtained by tracing the solution path with respect to the regularization parameter, whose value is varied as $\{2^{-8},2^{-7},\ldots,2^{24}\}$. For RSLM, we fix the value of the regularization parameter $C$, and vary the rank $r$ to obtain different ranked solutions. The experiments are performed on two benchmark multi-task regression data sets: 
\begin{itemize}
\item Parkinsons: We need to predict the Parkinson's disease symptom score of 42 patients~\citep{Frank10}. Each patient is described using $19$ bio-medical features. The data set has a total of 5,875 readings from all the patients. 
\item School: The data consists of 15,362 students from 139 schools~\citep{Argyriou08}. The aim is to predict the performance of each student given their description and earlier record. Overall, each student data has 28 features. Predicting the performance of students belonging to one school is considered as one task.
\end{itemize}
Following~\citep{Argyriou08}, we report the normalized mean square error over the test set (test NMSE). 

\textbf{Results:} Figure~\ref{appendix:fig:multiTask}(a) present the results on the Parkinsons data set. We observe from the figure that our method achieves better generalization performance at low ranks compared to MTFL. 
Figure~\ref{appendix:fig:multiTask}(b) plots the variation of duality gap with rank for our algorithm. We observe that as the rank is varied from low to high, we converge to the globally optimal solution of (\ref{eqn:dual11}) and obtain the duality gap close to zero. 
Similar results are obtained on the School data set as observed in Figures~\ref{appendix:fig:multiTask} (c)\&(d).

\section{Conclusion}\label{sec:conclusion}
We have proposed a novel factorization for structured low-rank matrix learning problems, which stems from the application of duality theory and rank-constrained parameterization of positive semi-definite matrices. This allows to develop a conceptually simpler and unified optimization framework for various applications. State-of-the-art performance of our algorithms on several applications shows the effectiveness of our approach. 

\section*{Acknowledgement}
We thank L\'eopold Cambier, Ivan Markovsky, and Konstantin Usevich for useful discussions. We also thank Rodolphe Jenatton, Syama Rangapuram, and Matthias Hein for giving feedback on an earlier version of this work. We are grateful to Adams Wei Yu and Mingkui Tan for making their codes available. Most of this work was done when PJ and BM were at Amazon.com, India.

%


\appendix

\section{Optimization on Spectrahedron}\label{sec:optSpectrahedron}
We are interested in the optimization problem of the form
\begin{equation}\label{sup:eq:problem_formulation}
\begin{array}{lll}
\minop_{\Theta \in \P^d} & f(\Theta), \\
\end{array}
\end{equation}
where $\P^d$ is the set of $d\times d$ positive semi-definite matrices with unit trace and $f: \P^d \rightarrow \R $ is a smooth function. A specific interest is when we seek matrices of rank $r$. Using the parameterization $\Theta = \bU \bU^\top$, the problem (\ref{sup:eq:problem_formulation}) is formulated as 
\begin{equation}\label{sup:eq:fixed_rank_problem_formulation}
\begin{array}{lll}
\minop_{\bU \in \mathcal{S}_r^d} & f(\bU \bU^\top),
\end{array}
\end{equation}
where $\mathcal{S}_r^d\coloneqq \{\bU \in \mathbb{R}^{d \times r}: \| \bU\|_F = 1 \}$, which is called the spectrahedron manifold \citep{journee10a}. It should be emphasized the objective function in (\ref{sup:eq:fixed_rank_problem_formulation}) is \emph{invariant} to the post multiplication of $\bU$ with orthogonal matrices of size $r\times r$, i.e., $\bU \bU^\top =\bU \bQ (\bU \bQ)^\top$ for all $\bQ \in \OG{r}$, which is the set of orthogonal matrices of size $r\times r$ such that $\bQ \bQ^\top = \bQ^\top \bQ = \bI$. An implication of the this observation is that the minimizers of (\ref{sup:eq:fixed_rank_problem_formulation}) are no longer isolated in the matrix space, but are isolated in the quotient space, which is the set of equivalence classes $[\bU] \coloneqq \{ \bU \bQ: \bQ \bQ^\top = \bQ^\top \bQ = \bI \}$. Consequently, the search space is 
\begin{equation}\label{sup:eq:equivalence_manifold}
\begin{array}{lll}
\mathcal{M}  \coloneqq  \mathcal{S}_r^d/\OG{r}.
\end{array}
\end{equation}
In other words, the optimization problem (\ref{sup:eq:fixed_rank_problem_formulation}) has the structure of optimization on the \emph{quotient} manifold, i.e.,
\begin{equation}\label{sup:eq:manifold_formulation}
\begin{array}{lll}
\minop_{[\bU] \in \mathcal{M}} & f([\bU]),
\end{array}
\end{equation}
but numerically, by necessity, algorithms are implemented in the matrix space $\mathcal{S}_r^d$, which is also called the \emph{total space}.

Below, we briefly discuss the manifold ingredients and their matrix characterizations for (\ref{sup:eq:manifold_formulation}). Specific details of the spectrahedron manifold are discussed in \citep{journee10a}. A general introduction to manifold optimization and numerical algorithms on manifolds are discussed in \citep{absil08a}.

\subsection{Tangent vector representation as horizontal lifts}
Since the manifold $\mathcal{M}$, defined in (\ref{sup:eq:equivalence_manifold}), is an abstract space, the elements of its tangent space $T_{[\bU]} \mathcal{M}$ at $[\bU]$ also call for a matrix representation in {the tangent space $T_\bU \mathcal{S}_r^d$} that respects the equivalence {relation} $\bU \bU^\top =\bU \bQ (\bU \bQ)^\top$ for all $\bQ \in \OG{r}$. Equivalently, {the} matrix representation of $T_{[\bU]} \mathcal{M}$ should be restricted to the directions in the tangent space $T_{\bU}   \mathcal{S}_r^d$ on the total space $  \mathcal{S}_r^d$ at ${ \bU}$ that do not induce a displacement along the equivalence class $[\bU]$. In particular, we decompose  $T_\bU \mathcal{S}_r^d$ into complementary subspaces,  the \emph{vertical} $\mathcal{V}_{\bU}$ and \emph{horizontal} $\mathcal{H}_\bU$ subspaces, such that $ \mathcal{V}_{  \bU}  \oplus \mathcal{H}_{  \bU} = T_{  \bU}  \mathcal{S}_r^d$. 

\begin{table*}[t]
\begin{center} \scriptsize
\caption{Matrix characterization of notions on the quotient manifold $\mathcal{S}_r^d/\OG{r}$.}
\label{sup:tab:spaces} 
\begin{tabular}{ p{5.5cm} | p{5.5cm}} 
& \\
\toprule
& \\
Matrix representation of an element & $\bU $    \\ 
&  \\
Total space $\mathcal{S}_r^d$ & $\{\bU \in \R^{d\times r}: \|\bU\|_F = 1 \}$   \\ 
 &  \\
Group action &   $\bU \mapsto \bU \bQ$, where $\bQ \in \OG{r}$.  \\  
 & \\
Quotient space ${\mathcal M}$ & $\mathcal{S}_r^d /\OG{r}$  \\ 
& \\
Tangent vectors in the total space $\mathcal{S}_r^d $ at $\bU$ &$ \{ \bZ \in \R^{d \times r} :  \trace(\bZ^\top \bU) = 0 \}$  \\
&  \\
Metric between the tangent vector $\xi_\bU, \eta_\bU \in T_\bU \mathcal{S}_r^d$
& $\trace(\xi_\bU ^\top \eta_\bU)$ \\
& \\
Vertical tangent vectors at $\bU$ &  $\{\bU {\bf \Lambda}: {\bf \Lambda}\in \R^{r\times r},{\bf \Lambda}^\top = - {\bf\Lambda} \}$ \\
&  \\
Horizontal tangent vectors & $\{ \xi_\bU \in T_{\bU} \mathcal{S}_r^d : \xi_\bU^\top \bU = \bU^\top \xi_\bU\} $\\
&  \\
\bottomrule
\end{tabular}
\end{center} 
\end{table*}

The vertical space $\mathcal{V}_{ \bU}$ is the tangent space of the equivalence class $[\bU]$. On the other hand, the horizontal space $\mathcal{H}_{  \bU}$, which is {any complementary subspace} to $\mathcal{V}_{  \bU}$ in $T_\bU  \mathcal{S}_r^d$, provides a valid matrix representation of the abstract tangent space $T_{[\bU]} \mathcal{M}$. An abstract tangent vector $\xi_{[\bU]} \in T_{[\bU]} \mathcal{M}$ at $ [\bU]$ has a unique element in the horizontal space $ {\xi}_{ {\bU}}\in\mathcal{H}_{ {\bU}}$ that is called its \emph{horizontal lift}. {Our specific choice of the horizontal space is the subspace of $T_\bU  \mathcal{S}_r^d$ that is the \emph{orthogonal complement} of $\mathcal{V}_{  x}$ in the sense of a \emph{Riemannian metric}}.

The Riemannian metric at a point on the manifold is a inner product that is defined in the tangent space. An additional requirement is that the inner product needs to be \emph{invariant} along the equivalence classes \citep[Chapter~3]{absil08a}. One particular choice of the Riemannian metric on the total space $\mathcal{S}_r^d$ is
\begin{equation}\label{sup:eq:metric}
	 \langle {\xi}_{ {\bU}}, {\eta}_{ {\bU}} \rangle_ \bU:= \trace(\xi_\bU ^\top \eta_\bU),
\end{equation}
where $\xi_\bU, \eta_\bU \in T_\bU \mathcal{S}_r^d$. The choice of the metric (\ref{sup:eq:metric}) leads to a natural choice of the metric on the quotient manifold, i.e., 
\begin{equation}\label{sup:eq:metric_quotient}
	 \langle {\xi}_{ {[\bU]}}, {\eta}_{ [{\bU}]} \rangle_ {[\bU]}:= \trace(\xi_\bU ^\top \eta_\bU),
\end{equation}
where $\xi_{[\bU]}$ and $ \eta_{[\bU]}$ are abstract tangent vectors in $T_{[\bU]} \mathcal{M} $ and $\xi_\bU$ and $\eta_\bU$ are their horizontal lifts in the total space $\mathcal{S}_r^d$, respectively. Endowed with this Riemannian metric, the quotient manifold $\mathcal{M}$ is called a \emph{Riemannian} quotient manifold of $\mathcal{S}_r^d$.

Table \ref{sup:tab:spaces} summarizes the concrete matrix operations involved in computing horizontal vectors. 

Additionally, starting from an arbitrary matrix (an element in the ambient dimension $\R^{d\times r}$), two linear projections are needed: the first projection $\Psi_\bU$ is onto the tangent space $T_\bU \mathcal{S}_r^d$ of the total space, while the second projection $\Pi_\bU$ is onto the horizontal subspace $\mathcal{H}_\bU$. 

Given a matrix $\bZ \in \R^{d\times r}$, the projection operator $\Psi_\bU: \R^{d\times r} \rightarrow T_\bU \mathcal{S}_r^d : \bZ \mapsto \Psi_\bU(\bZ)$ on the tangent space is defined as 
\begin{equation}\label{sup:eq:tangent_space_projector}
\begin{array}{lll}
\Psi_\bU(\bZ) = \bZ - \trace(\bZ^\top  \bU) \bU. 
\end{array}
\end{equation}

Given a tangent vector $\xi_\bU \in T_\bU \mathcal{S}_r^d$, the projection operator $\Pi_\bU: T_\bU \mathcal{S}_r^d \rightarrow \mathcal{H}_\bU: \xi_\bU \mapsto \Pi_\bU(\xi_\bU)$ on the horizontal space is defined as 
\begin{equation}\label{sup:eq:horizontal_space_projector}
\begin{array}{lll}
\Pi_\bU(\xi_\bU) = \xi_\bU -  \bU {\bf \Lambda},
\end{array}
\end{equation}
where $\bf \Lambda$ is the solution to the \emph{Lyapunov} equation
\begin{equation*}
(\bU^\top \bU){\bf \Lambda} + {\bf \Lambda} (\bU^\top \bU) =  \bU^\top \xi_\bU- \xi_\bU^\top \bU.
\end{equation*}

\subsection{Retractions from Horizontal Space to Manifold}

An iterative optimization algorithm involves computing a search direction ({\it e.g.,} the gradient direction) and then “moving in that direction”. The default option on a Riemannian manifold is to move along geodesics, leading to the definition of the exponential map. Because the calculation of the exponential map can be computationally demanding, it is customary in the context of manifold optimization to relax the constraint of moving along geodesics. The exponential map is then relaxed to a \emph{retraction} operation, which is any map $R_\bU : \mathcal{H}_
\bU \rightarrow \mathcal{S}_r^d: \xi_\bU \mapsto R_\bU(\xi_\bU) $ that locally approximates the exponential map on the manifold \citep[Definition~4.1.1]{absil08a}. On the spectrahedron manifold, a natural retraction of choice is 
\begin{equation}\label{sup:eq:retraction}
R_\bU(\xi_\bU) := (\bU + \xi_\bU)/\|\bU + \xi_\bU \|_F,
\end{equation} 
where $\| \cdot \|_F$ is the Frobenius norm and $\xi_\bU$ is a search direction on the horizontal space $\mathcal{H}_\bU$.

An update on the spectrahedron manifold is, thus, based on the update formula
$\bU_+ = R_\bU (\xi_\bU) $.

\subsection{Riemannian Gradient and Hessian Computations}
The choice of the invariant metric (\ref{sup:eq:metric}) and the horizontal space turns the quotient manifold $\mathcal{M}$ into a \emph{Riemannian submersion} of $(\mathcal{S}_r^d, \langle \cdot, \cdot \rangle)$. As shown by \citep{absil08a}, this special construction allows for a convenient matrix characterization of the gradient and the Hessian of a function on the abstract manifold $\mathcal{M}$.

The matrix characterization of the Riemannian gradient is 
\begin{equation}\label{sup:eq:rgrad}
\begin{array}{lll}
\grad_\bU f = \Psi_\bU(\nabla_\bU f),
\end{array}
\end{equation}
where $\nabla_\bU f$ is the Euclidean gradient of the objective function $f$ and $\Psi_\bU$ is the tangent space projector (\ref{sup:eq:tangent_space_projector}).

An iterative algorithm that exploits second-order information usually requires the Hessian applied along a search direction. This is captured by the Riemannian Hessian operator $\hess$, whose matrix characterization, given a search direction $\xi_\bU \in \mathcal{H}_\bU$, is 
\begin{equation}\label{sup:eq:rhess}
\begin{array}{lll}
\hess_{\bU}[\xi_{\bU}] = \Pi_\bU\Big({\rm D}\nabla f[\xi_\bU] -  \trace((\nabla_\bU f)^\top \bU)\xi_\bU \\
 \quad \qquad  - \trace((\nabla_\bU f)^\top \xi_\bU + ({\rm D}\nabla f[\xi_\bU])^\top \bU)\bU\Big), \\ 
\end{array}
\end{equation}
where ${\rm D}\nabla f[\xi_\bU]$ is the directional derivative of the Euclidean gradient $\nabla_\bU f$ along $\xi_\bu$ and $\Pi_\bU$ is the horizontal space projector (\ref{sup:eq:horizontal_space_projector}).

Finally, the formulas in (\ref{sup:eq:rgrad}) and (\ref{sup:eq:rhess}) that the Riemannian gradient and Hessian operations require only the expressions of the standard (Euclidean) gradient of the objective function $f$ and the directional derivative of this gradient (along a given search direction) to be supplied.


\bibliography{myreferences}
\end{document}